\documentclass{article}
\usepackage[a4paper,margin=1in]{geometry}
\usepackage{times}
\usepackage{authblk}
\usepackage{natbib}

\usepackage[explicit]{titlesec}
\titleformat{\section}
  {\normalfont\large\scshape\raggedright}{\thesection}{1em}{#1}
\titleformat{\subsection}
  {\normalfont\normalsize\scshape\raggedright}{\thesubsection}{1em}{#1}
\titleformat{\subsubsection}
  {\normalfont\normalsize\scshape\raggedright}{\thesubsubsection}{1em}{#1}

\titlespacing*{\section}      {0pt}{2.0ex plus 0.5ex minus 0.2ex}{1.5ex plus 0.3ex minus 0.2ex}
\titlespacing*{\subsection}   {0pt}{1.8ex plus 0.5ex minus 0.2ex}{0.8ex plus 0.2ex}
\titlespacing*{\subsubsection}{0pt}{1.5ex plus 0.5ex minus 0.2ex}{0.5ex plus 0.2ex}

\usepackage{hyperref}
\usepackage{url}
\usepackage{amsmath}
\usepackage{mathrsfs}

\usepackage{amsthm}
\usepackage{enumitem}

\setlist{topsep=4pt plus 1pt minus 2pt,
         partopsep=1pt plus 0.5pt minus 0.5pt,
         itemsep=2pt plus 1pt minus 0.5pt,
         parsep=2pt plus 1pt minus 0.5pt}
\usepackage{tikz}
\usetikzlibrary{arrows.meta,positioning,calc}

\usepackage{amsmath,amsfonts,bm}

\def\eqref#1{equation~\ref{#1}}

\def\1{\bm{1}}

\DeclareMathAlphabet{\mathsfit}{\encodingdefault}{\sfdefault}{m}{sl}
\SetMathAlphabet{\mathsfit}{bold}{\encodingdefault}{\sfdefault}{bx}{n}

\newcommand{\E}{\mathbb{E}}

\newcommand{\R}{\mathbb{R}}

\newtheorem{theorem}{Theorem}[section]
\newtheorem{lemma}[theorem]{Lemma}
\newtheorem{remark}[theorem]{Remark}
\newcommand{\cY}{\mathcal Y}
\newtheorem{proposition}[theorem]{Proposition}
\newtheorem{corollary}[theorem]{Corollary}

\theoremstyle{definition}
\newtheorem{definition}[theorem]{Definition}
\newtheorem{example}[theorem]{Example}

\newtheorem{assumption}[theorem]{Assumption}
\theoremstyle{remark}

\newcommand{\inner}[2]{\langle #1, #2 \rangle}
\newif\ifhedwigcomments
\hedwigcommentstrue

\newif\iflukascomments
\lukascommentsfalse

\newcommand{\cX}{\mathcal X}
\newcommand{\cZ}{\mathcal Z}

\newcommand{\cD}{\mathcal D}

\DeclareMathOperator{\SO}{SO}

\newcommand{\PR}{\mathbb{P}}
\newcommand{\B}{\mathscr{B}}

\newcommand{\A}{\mathscr{A}}
\newcommand{\cT}{\mathcal{T}}
\renewcommand{\eqref}[1]{Equation~\textup{\ref{#1}}}

\title{Latent Process Generator Matching}
\date{\today}
\newif\ifhncomments
\hncommentstrue
\newif\ifbencomments
\bencommentstrue

\bencommentstrue

\author{Lukas Billera\thanks{Equal contribution. Correspondence: \texttt{\{lukas.billera, hedwig.nordlinder, benjamin.murrell\}@ki.se}}, Hedwig Nora Nordlinder$^*$, Ben Murrell.}
\affil{Department of Microbiology, Tumor and Cell Biology, Karolinska Institutet, Stockholm, Sweden}

\begin{document}

\maketitle
\begin{abstract}
Many recent flow-matching and diffusion-style generative models rely on auxiliary stochastic dynamics during training: a richer process is simulated to define conditional targets, but the auxiliary state is either intractable to sample at generation time or simply not part of the desired output. Existing Generator Matching theory formalises conditioning on static latent random variables, and several recent papers prove special cases of projection results for particular augmented-state constructions. We introduce latent process generator matching, a general framework that treats the observed generative state as a deterministic image $X_t=\Phi(Y_t)$ of a tractable Markov process $Y_t$. We show that in this setting one may learn the generator of a stochastic process on the image space which has the same one-time marginal distributions as the projected process. This generalizes and subsumes the discrete latent process results from the literature, and extends Generator Matching from static latent variables to a rich family of time-dependent latent conditional processes. 

\end{abstract}
\section{Introduction}
In recent extensions of flow matching and diffusion approaches to generative modelling, one constructs a Markov process on an extended state space \(\mathcal{X} \times \mathcal{Z}\) representing the conditional paths of a generative model, but at generation time one wishes to simulate a process on \(\mathcal{X}\) alone whose one-time marginals coincide with those of the \(\mathcal{X}\)-component of the joint process. This is the case for \citet{ifriqi2025flowceptiontemporallyexpansiveflow}, \citet{nguyen2025oneflowconcurrentmixedmodalinterleaved}, and \citet{havasi2025editflowsflowmatching}. A related situation arises when an auxiliary process is introduced to aid training but modelling its dynamics at generation time is unnecessary or difficult, as in \citet{billera2025branchingflowsdiscretecontinuous} and \citet{kim2025anyorderflexiblelengthmasked}. In each of these works, the projection result and its associated loss are derived on a case-by-case basis, and all theorems are restricted to marginalization over a discrete component of the extended state space. We introduce a general framework that removes these restrictions: given a time-inhomogeneous Feller process \((Y_t)_{0 \leq t \leq 1}\) on an arbitrary state space \(\mathcal{Y}\) and a map \(\Phi \colon \mathcal{Y} \to \mathcal{X}\), one may learn a linear parametrisation of the generator of a Feller process on \(\mathcal{X}\) whose one-time marginals coincide with those of \((\Phi(Y_t))_{0 \leq t \leq 1}\). For $\mathcal Y = \mathcal X \times \mathcal Z$ and $\Phi$ the projection onto the first coordinate, this subsumes these prior works as special cases, allowing for a general class of latent processes $(Z_t)_{0\leq t \leq 1}$ in a nearly arbitrary state space $\mathcal Z$, using the formalism of generator matching to allow for continuous, discrete, or manifold-valued processes.

In particular, the learnt process at \(t = 1\) samples from the distribution of \(\Phi(Y_1)\), which is the desired data distribution. We give sufficient conditions for a loss function to be valid in this general setting, recovering the results of the works cited above as corollaries. This result has broad applicability, enabling the construction of a wide array of new flow matching schemes by allowing for a more general class of latent spaces. As a concrete new application, we outline a non-projection \(\Phi \colon \mathcal{Y} \to \mathcal{X}\) with manifold-valued latents for protein structure generation that separates chain-level rigid-body motion from internal flexibility (\S\ref{sec:nonprojection}), where the particular chain-level versus residue-level or internal state is latent, and the model only sees the world state, which we plan to implement in future work.

\section{Earlier Work}\label{sec:earlierwork}

Several recent generative models train with the aid of a latent stochastic process that is marginalised out at generation time. We briefly survey these and note how the pushforward framework developed herein subsumes the special-case justifications given in each.

\subsection{Generator Matching}
Generator Matching \citep{holderrieth2025generatormatchinggenerativemodeling} unifies a large class of diffusion and flow matching models by showing that one may learn the marginal generator of a time-inhomogeneous Feller process by training against conditional generators, where conditioning is on a static latent random variable $Z$ (typically a pair of endpoints). The present work extends this to conditioning on a latent \textit{stochastic process}.

\subsection{Edit Flows, OneFlow and Flowception}
Edit Flows \citep{havasi2025editflowsflowmatching} solves variable-length discrete generation by adjoining a null token $\varepsilon$ to the alphabet and learning joint rates on an extended state space $\cX \times \cZ$. The projection onto the first coordinate (stripping the null tokens) induces rates on $\cX$ alone. Theorem~3.1 therein justifies this projection for CTMCs. The same technique is employed by OneFlow \citep{nguyen2025oneflowconcurrentmixedmodalinterleaved} but for interleaved token image generation and by Flowception \citep{ifriqi2025flowceptiontemporallyexpansiveflow} for video generation via learned frame-insertion rates. In the latter two, a continuous process depends non-trivially on a latent discrete process which must be marginalised out. The theorems in Edit Flows, however, are concerned only with the case where both the main and latent process are discrete. 

\subsection{Branching Flows}
Branching Flows \citep{billera2025branchingflowsdiscretecontinuous} introduces tree-structured conditional paths for variable-length generation on manifold, Euclidean and discrete state spaces. The branching structure creates an ambiguity over which element belongs to which branch, necessitating an auxiliary discrete process. The resulting \textit{auxiliary generator matching} result extends Theorem~3.1 of \citet{havasi2025editflowsflowmatching} to the case of an arbitrary ``main'' state space, while still requiring the latent process to be discrete.

\subsection{Any-Order Flexible Length Masked Diffusion}
Similarly to \citet{havasi2025editflowsflowmatching}, ``Any-order flexible length masked diffusion'' \citep{kim2025anyorderflexiblelengthmasked} solves the diffusion-model variable-length generation task. It uses flexible masked diffusion models, extending the continuous stochastic interpolants of \cite{albergo2025stochasticinterpolantsunifyingframework} to the discrete, variable-length setting by training a neural network to learn the unmasking posterior and an insertion expectation, which determine the insertion/unmasking rates. During training one augments with an auxiliary process \(s_t\) which keeps track of the indices of tokens as they are inserted. This auxiliary process is used during training and subsequently marginalised out for generation, similar to the branch-index tracking process of \citet{billera2025branchingflowsdiscretecontinuous}, except specialised for the case of a discrete state space. 

\section{Pushforward Generator Matching}
\subsection{Generator Matching}\label{subsec:gmrecap}

We briefly recall the Generator Matching framework of \cite{holderrieth2025generatormatchinggenerativemodeling}, which provides the theoretical foundation for the present work. Let $(\cX, d_\cX)$ be a Polish space. A time-inhomogeneous Feller process $(X_t)_{0 \leq t \leq 1}$ on $\cX$ is characterised by its infinitesimal generator $L_t$, which acts on test functions $f \in \cT(\cX) \subset C(\cX)$ and determines the evolution of the time-marginals $(p_t)_{0 \leq t \leq 1}$ via the Kolmogorov forward equation (KFE)
\begin{equation*}
    \partial_t \langle p_t, f \rangle = \langle p_t, L_t f \rangle, \qquad \forall f \in \cT(\cX),\quad t\in[0,1).
\end{equation*}
This framework encompasses a wide range of generative processes. For instance, when $\cX = \R^d$ and $L_t f(x) = b_t(x) \cdot \nabla f(x) + \tfrac{1}{2}\sigma_t^2(x) \Delta f(x)$, the KFE is the Fokker--Planck equation of a stochastic differential equation. When $\cX$ is a finite set and $L_t f(x) = \sum_{x'} u_t(x' \mid x)[f(x') - f(x)]$, it is the forward equation of a continuous-time Markov chain with rates $u_t$.

However, one does not need to learn the marginal generator $L_t$ directly. Instead, given a latent random variable $Z$, a conditional generator $L_t^z$ parametrised by $F_t^z(x)$ for each realisation $Z = z$ is defined and a neural network $F_t^\theta(x)$ is trained against these conditional targets. Training against the conditional loss recovers the correct marginal generator via a gradient equality \cite[Theorem~1]{holderrieth2025generatormatchinggenerativemodeling}.

\subsection{Linear Parametrisations of Infinitesimal Generators}\label{subsec:linparam}

The notion of a linear parametrisation, introduced in \cite{holderrieth2025generatormatchinggenerativemodeling} and extended to time- and state-dependent form in \cite{billera2025timedependentlossreweighting}, is central to the training objectives of Generator Matching.

\begin{definition}[Linear parametrisation of a generator]\label{def:linparam}
    Let $L_t \colon \cT(\cX) \to C(\cX)$ be the generator of a Feller process on a Polish space $\cX$. A \textit{(time- and state-dependent) linear parametrisation} of $L_t$ consists of:
    \begin{enumerate}[label=\textup{(\roman*)}]
        \item for each time $t$ and $x \in \cX$, an inner product space $(V_{t,x}, \langle \cdot, \cdot \rangle_{V_{t,x}})$;
        \item a linear map $\mathcal{K}_{t,x} \colon \cT(\cX) \to V_{t,x}$;
        \item a closed convex set $\Omega_{t,x} \subset V_{t,x}$ and a target function $F_t(x) \in \Omega_{t,x}$;
    \end{enumerate}
    such that for all $f \in \cT(\cX)$ it holds that
    \begin{equation}\label{eqn:linparam}
        L_t f(x) = \langle \mathcal{K}_{t,x} f, F_t(x) \rangle_{V_{t,x}}.
    \end{equation}
    The map $\mathcal{K}_{t,x}$ encodes the ``structural'' part of the generator (e.g.\ finite differences for a CTMC, or differential operators for a diffusion), while $F_t(x)$ captures the learnable parameters (e.g.\ rates or velocity fields). Training a neural network $F_t^\theta(x)$ to approximate $F_t(x)$ then recovers the generator $L_t$.
\end{definition}

\subsection{Marginal and Conditional Generators}
\begin{theorem}[Pushforward Kolmogorov Forward Equation]\label{thrm:pushforwardkfe}
    Let $(\cY,d_{\cY})$ be a Polish space and let $(Y_t)_{0\leq t\leq1}$ be a time-inhomogeneous Feller process on $\cY$ with infinitesimal generator $W_t$ and time-marginals $(p_t^{\cY})_{0\leq t\leq 1}$, satisfying the regularity conditions of \cite[Appendix~A.2]{holderrieth2025generatormatchinggenerativemodeling} (Assumption~\ref{ass:gm-cY}).
    Let $\Phi\colon\cY\to\cX$ be a measurable map into a measurable space $(\cX,\B(\cX))$ satisfying the domain compatibility conditions of Assumption~\ref{ass:Phi}, and let $p_t^{\cX}:=\Phi_{\#}p_t^{\cY}$. Assume the following integrability condition (Assumption~\ref{ass:integrability}): for every $f\in\cT(\cX)$ and every $t\in[0,1)$,
    \begin{equation}\label{eqn:integrability}
        \E_{y_t \sim p_t^{\cY}}[|W_t(f\circ\Phi)(y_t)|] < \infty.
    \end{equation}
    Define, for each $t\in[0,1)$ and each test function $f\in\cT(\cX)$,
    \begin{equation*}
        L_t f(x_t) := \E[W_t(f\circ\Phi)(Y_t)\mid\Phi(Y_t)=x_t].
    \end{equation*}
    Then $p_t^{\cX}$ satisfies the Kolmogorov forward equation with generator $L_t$:
    \begin{equation*}
        \partial_t \langle p_t^{\cX}, f\rangle = \langle p_t^{\cX}, L_t f\rangle, \qquad \forall f\in\cT(\cX),\quad t\in[0,1),
    \end{equation*}
    where $\langle \mu, g \rangle := \int g \, d\mu$. If additionally the KFE with generator $L_t$ uniquely determines $(p_t^{\cX})_{0 \leq t \leq 1}$ (Assumption~\ref{ass:gm-cX}), then $L_t$ may be used within the Generator Matching framework of \cite{holderrieth2025generatormatchinggenerativemodeling}. The proof is given in Appendix~\ref{subsec:proof}; all assumptions are stated formally in Appendix~\ref{subsec:hypotheses}.
\end{theorem}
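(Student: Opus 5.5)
The argument is a change of variables followed by the tower property of conditional expectation. Fix $f\in\cT(\cX)$ and $t\in[0,1)$. By the definition of the pushforward, $\langle p_t^{\cX}, f\rangle = \E[f(\Phi(Y_t))] = \langle p_t^{\cY}, f\circ\Phi\rangle$. The domain compatibility conditions of Assumption~\ref{ass:Phi} give $f\circ\Phi\in\cT(\cY)$, i.e.\ it lies in the domain of $W_t$. The KFE for $(Y_t)$ holds under the regularity conditions of Assumption~\ref{ass:gm-cY}, as recalled in \S\ref{subsec:gmrecap}. Applying it to the test function $f\circ\Phi$ gives
\begin{equation*}
\partial_t\langle p_t^{\cX},f\rangle=\partial_t\langle p_t^{\cY},f\circ\Phi\rangle=\langle p_t^{\cY},W_t(f\circ\Phi)\rangle=\E\bigl[W_t(f\circ\Phi)(Y_t)\bigr].
\end{equation*}

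Next I rewrite the right-hand side using conditioning on $\Phi(Y_t)$. The integrability condition \eqref{eqn:integrability} says that $W_t(f\circ\Phi)(Y_t)\in L^1$, so its conditional expectation given the sub-$\sigma$-algebra $\sigma(\Phi(Y_t))$ exists. By the Doob--Dynkin lemma there is a measurable $g_t^f\colon\cX\to\R$, unique $p_t^{\cX}$-a.e., with $\E[W_t(f\circ\Phi)(Y_t)\mid\Phi(Y_t)]=g_t^f(\Phi(Y_t))$ almost surely. This $g_t^f$ is precisely $L_tf$ as defined in the statement. The tower property then yields
\begin{equation*}
\E\bigl[W_t(f\circ\Phi)(Y_t)\bigr]=\E\bigl[L_tf(\Phi(Y_t))\bigr]=\int_\cX L_tf\,dp_t^{\cX}=\langle p_t^{\cX},L_tf\rangle,
\end{equation*}
where the middle equality is again the change of variables for the pushforward. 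Combining the two displays gives the KFE for $p_t^{\cX}$ with generator $L_t$.

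The delicate point is making sense of $L_tf$ as a pointwise-defined object, and in particular its linearity in $f$. Conditional expectations are defined only up to null sets, and that null set depends on $f$. I would handle this in one of two ways. The first is to use a regular conditional distribution $\kappa_t(x,dy)$ of $Y_t$ given $\Phi(Y_t)=x$, which exists because $\cY$ is Polish. One then sets $L_tf(x):=\int W_t(f\circ\Phi)\,d\kappa_t(x,\cdot)$; this is linear in $f$ and finite for $p_t^{\cX}$-a.e.\ $x$ by \eqref{eqn:integrability}. The second is to note that only the integral against $p_t^{\cX}$ enters the KFE, so any version suffices.

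The final claim needs no further work. Once Assumption~\ref{ass:gm-cX} gives uniqueness of solutions to the KFE, the marginals $(p_t^{\cX})$ are the unique marginals generated by $L_t$. Learning $L_t$ within Generator Matching \citep{holderrieth2025generatormatchinggenerativemodeling} therefore recovers $(p_t^{\cX})$, and in particular $\Phi_{\#}p_1^{\cY}$.
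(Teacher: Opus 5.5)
Your proposal is correct and follows the paper's argument: change of variables, the KFE for $(Y_t)$ applied to $f\circ\Phi\in\cD(W_t)$, and then identification of $\E[W_t(f\circ\Phi)(Y_t)]$ with $\langle p_t^{\cX},L_tf\rangle$. Your tower-property route is exactly the paper's ``alternative proof''. The paper's main proof instead uses the regular-conditional-distribution disintegration, which you mention as your first way of fixing a pointwise version of $L_tf$.
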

\begin{remark}
    Assumption~\ref{ass:gm-cX} is non-trivial and a regularity-violating example is given in \S\ref{subs:regularity-violating}.
\end{remark}
For many cases we will be concerned with ``marginalising'' a latent process to learn a base process. We therefore state this special case as a corollary.
\begin{corollary}[Generator of a projected process]\label{cor:projectedprocess}
    Specialising Theorem~\ref{thrm:pushforwardkfe} to $\cY = \cX \times \cZ$ and $\Phi = \pi_{\cX}$, the projection onto the first coordinate: let $(X_t, Z_t)$ be a Feller process on $\cX \times \cZ$ with infinitesimal generator $W_t$, satisfying the hypotheses of Theorem~\ref{thrm:pushforwardkfe}. Then
    \begin{equation*}
        L_tf(x_t) := \E[W_t(f \circ \pi_{\cX})(X_t,Z_t)\mid X_t=x_t]
    \end{equation*}
    generates a process on $\cX$ with marginal measure $p_t^{\cX} = [\pi_{\cX}]_{\#} p_t^{\cX \times \cZ}$, that is, $L_t$ solves the Kolmogorov Forward Equation
    \begin{equation*}
        \partial_t \langle p_t^{\cX}, f\rangle = \langle p_t^{\cX}, L_t f\rangle, \qquad t\in[0,1).
        \end{equation*}
\end{corollary}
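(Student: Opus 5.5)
This is a direct specialisation of Theorem~\ref{thrm:pushforwardkfe}, so the plan is to verify its hypotheses for $\cY = \cX\times\cZ$ and $\Phi=\pi_{\cX}$ and then read off the conclusion. I will take $\cZ$ to be Polish, as the statement implicitly assumes. Then $\cX\times\cZ$ with the product metric (for instance $d_\cX + d_\cZ$) is again Polish. Its Borel $\sigma$-algebra coincides with $\B(\cX)\otimes\B(\cZ)$, because both factors are separable. The projection $\pi_\cX$ is continuous, hence measurable. The remaining hypotheses, namely the regularity of $(X_t,Z_t)$ (Assumption~\ref{ass:gm-cY}), the domain compatibility of $\Phi$ (Assumption~\ref{ass:Phi}) and the integrability condition \eqref{eqn:integrability}, are exactly what the corollary means by ``satisfying the hypotheses of Theorem~\ref{thrm:pushforwardkfe}''. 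So they are assumed rather than proved.

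Next I identify the objects. Since $\Phi(Y_t) = \pi_\cX(X_t,Z_t) = X_t$, the conditioning $\sigma$-algebra $\sigma(\Phi(Y_t))$ is $\sigma(X_t)$. The generator defined in Theorem~\ref{thrm:pushforwardkfe} therefore becomes
\begin{equation*}
L_t f(x_t) = \E[W_t(f\circ\pi_\cX)(X_t,Z_t)\mid X_t = x_t].
\end{equation*}
This is understood as a version of the conditional expectation, which is well defined $p_t^{\cX}$-almost everywhere by the integrability assumption. The pushforward $\Phi_\# p_t^{\cY}$ is precisely $[\pi_\cX]_\# p_t^{\cX\times\cZ}$, which is the law of $X_t$. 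Applying Theorem~\ref{thrm:pushforwardkfe} then gives the KFE $\partial_t\langle p_t^\cX,f\rangle = \langle p_t^\cX, L_t f\rangle$ for all $f\in\cT(\cX)$ and $t\in[0,1)$.

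For completeness I would also recall why the KFE holds, in case a reader wants the one-line mechanism. By the KFE for $Y_t$ applied to the test function $f\circ\pi_\cX$,
\begin{equation*}
\partial_t \E[f(X_t)] = \partial_t\langle p_t^{\cY}, f\circ\pi_\cX\rangle = \E[W_t(f\circ\pi_\cX)(Y_t)].
\end{equation*}
The tower property then rewrites the right-hand side as $\E[L_tf(X_t)]$. This step requires $f\circ\pi_\cX$ to lie in the domain $\cT(\cY)$, which is the content of Assumption~\ref{ass:Phi}.

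There is no real obstacle here. The only point needing care is the phrase ``generates a process on $\cX$''. This requires, in addition, that the KFE with $L_t$ determines $(p_t^\cX)$ uniquely (Assumption~\ref{ass:gm-cX}). I would state explicitly that this is inherited as part of the hypotheses of Theorem~\ref{thrm:pushforwardkfe}, and that it is not automatic.
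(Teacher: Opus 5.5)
Your proposal is correct and follows the paper. The paper obtains the corollary directly by substituting $\cY=\cX\times\cZ$ and $\Phi=\pi_\cX$ into Theorem~\ref{thrm:pushforwardkfe}, and your sketch (the KFE for $Y_t$ applied to $f\circ\pi_\cX$, then the tower property) is the paper's own alternative proof of that theorem.

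One small correction of emphasis: in the paper, Assumption~\ref{ass:gm-cX} is not a hypothesis of the theorem but an extra condition (``if additionally\ldots'') needed only to use $L_t$ within Generator Matching. The corollary's ``that is'' equates ``generates'' with satisfying the KFE, so nothing beyond the KFE has to be proved here.
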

\begin{remark}[Compact-$\cZ$.]\label{rmk:compactZ}
    When $\cZ$ is compact, integrability along the $\cZ$-fibre is automatic by boundedness, and the domain condition reduces to the $\cX$-factor. For flow matching and diffusion, it suffices that $f \in C_0^k(\cX)$ for $k=1$ and $k=2$ respectively, and for CTMCs it is automatic. This covers any finite-state latent $Z_t$ (\S\ref{subs:subsumeef}, \S\ref{subs:1d-example}) and flow matching and diffusion on compact Riemannian manifolds such as $\SO(3)$.
\end{remark}
\begin{remark}\label{rmk:subsumeef}
    In \cite{havasi2025editflowsflowmatching} it is proven that if \(u_t(x',z'\mid x_t,z_t)\) is a rate on \(\mathcal{X} \times \mathcal Z\) that generates \(p_t(x,z)\) then 
    \begin{equation*}
        u_t(x'\mid x_t) := \sum_{z' \in \mathcal{Z}} \E_{z_t \sim p_t(\cdot \mid x_t)}[u_t(x',z'\mid x_t,z_t)]
    \end{equation*}
    generates an \(x\)-marginal path \(p_t^{\cX}(x)\). This result follows as a special case of Corollary~\ref{cor:projectedprocess}. For a detailed discussion, refer to \S\ref{subs:subsumeef}.
\end{remark}

\subsection{Latent Process Conditional Generator Matching Loss}\label{subsec:pushforwardcgm}
In this section we establish which loss functions are valid for training a neural network to learn the pushforward generator \(L_t\) of Theorem~\ref{thrm:pushforwardkfe}. One may train against a \textit{conditional} target which is defined pointwise for each realisation \(Y_t = y\) of the latent process and still recover the correct \textit{marginal} generator via an equality of gradients. This generalises the conditional generator matching paradigm of \cite{holderrieth2025generatormatchinggenerativemodeling} from conditioning on a static latent variable to conditioning on a latent stochastic process, and subsumes the special cases proven in \cite{havasi2025editflowsflowmatching, billera2025branchingflowsdiscretecontinuous}.

\begin{definition}\label{def:condgen}
    (Latent process conditional generator). In the setting of Theorem~\ref{thrm:pushforwardkfe}, for each \(y_t \in \mathcal{Y}\) define the \textit{conditional generator given \(Y_t = y_t\)} at the point \(x_t := \Phi(y_t)\) by
    \begin{equation*}
        L_t^{y_t} f(x_t) := W_t(f \circ \Phi)(y_t), \qquad f \in \mathcal{T}(\mathcal{X}).
    \end{equation*}
    This defines \(L_t^{y_t}f\) only at the single point \(x_t = \Phi(y_t)\), not as a function on all of~\(\cX\). When two distinct \(y, y' \in \cY\) satisfy \(\Phi(y) = \Phi(y')\), the conditional generators \(L_t^y\) and \(L_t^{y'}\) may assign different values at the same point. This is expected, as they correspond to different conditioning events (since $W_t$ may introduce $y$-dependence beyond $\Phi(y)$). By the definition of the pushforward generator in Theorem~\ref{thrm:pushforwardkfe}, the marginal and conditional generators are related by 
    \begin{equation}\label{eqn:margcondrelation}
        L_t f(x_t) = \E[L_t^{Y_t}f(x_t) \mid \Phi(Y_t) = x_t].
    \end{equation}
\end{definition}
\begin{definition}\label{def:condlinparam}
    (Conditional linear parametrisation). Suppose the conditional generator \(L_t^{y_t}\) of Definition~\ref{def:condgen} admits a time- and state-dependent linear parametrisation
    \begin{equation*}
        L_t^{y_t}f(\Phi(y_t)) = \inner{\mathcal{K}_{t,\Phi(y_t)}f}{F_t^{y_t}(\Phi(y_t))}_{V_{t,\Phi(y_t)}}
    \end{equation*}
    where \(\mathcal{K}_{t,x_t}\) and \(V_{t,x_t}\) are as in the linear parametrisation of \(L_t\) and \(F_t^{y_t}(\Phi(y_t)) \in \Omega_{t,\Phi(y_t)}\). Then by \eqref{eqn:margcondrelation} and linearity of the inner product, the marginal generator inherits a linear parametrisation with
    \begin{equation}\label{eqn:margfromcond}
        F_t(x_t) = \E[F_t^{Y_t}(x_t) \mid \Phi(Y_t) = x_t].
    \end{equation}
\end{definition}
\begin{remark}
    Observe that in the above, the linear parametrisation may only depend on the value $x_t := \Phi(y_t)$ for each $y_t\in \mathcal Y$.
\end{remark}
\begin{definition}\label{def:condgmloss}
    (Pushforward Conditional Generator Matching Loss). In the setting of Definition~\ref{def:condlinparam}, let \(D_{t,x_t} : \Omega_{t,x_t} \times \Omega_{t,x_t} \to \R\) be a Bregman divergence and let \(F_t^\theta\) be a neural network with \(F_t^\theta(x_t) \in \Omega_{t,x_t}\) for all \(t, x_t\). Assume that for all $t\in [0,1)$,\[\E_{y_t \sim p_t^{\cY}}[D_{t,\Phi(y_t)}(F_t^{y_t}(\Phi(y_t)),F_t^\theta(\Phi(y_t)))] < \infty.\] 
    The latent process conditional generator matching loss is 
    \begin{equation*}
        L_{\mathrm{cgm}}(\theta) := \E_{t \sim U[0,1], y_t \sim p_t^{\mathcal{Y}}(dy_t)}[D_{t,\Phi(y_t)}(F_t^{y_t}(\Phi(y_t)),F_t^\theta(\Phi(y_t)))].
    \end{equation*}
    Equivalently, by disintegrating \(p_t^{\mathcal{Y}}\) along \(\Phi\) (Corollary~\ref{cor:conditional-law}),
    \begin{equation*}
        L_{\mathrm{cgm}}(\theta) = \E_{t \sim U[0,1], x_t \sim p_t^{\mathcal{X}}, y_t \sim p_t^{\mathcal{Y}}(dy_t \mid \Phi(Y_t)=x_t)}[D_{t,x_t}(F_t^{y_t}(x_t),F_t^\theta(x_t))].
    \end{equation*}
\end{definition}
\begin{definition}\label{def:marggmloss}
    (Marginal Generator Matching Loss). In the setting of Definition~\ref{def:condlinparam}, the \textit{marginal generator matching loss} is 
    \begin{equation*}
        L_{\mathrm{gm}}(\theta) := \E_{t \sim U[0,1], x_t \sim p_t^\mathcal{X}}[D_{t,x_t}(F_t(x_t),F_t^\theta(x_t))].
    \end{equation*}
\end{definition}
\begin{theorem}\label{thrm:losseshavesamegrads}
    (Gradient equality of Conditional and Marginal Generator Matching Losses). Let \(L_\mathrm{cgm}(\theta), L_\mathrm{gm}(\theta)\) be as in Definitions~\ref{def:condgmloss} and~\ref{def:marggmloss}, and suppose the integrability conditions of Definition~\ref{def:condgmloss} hold. Then 
    \begin{equation*}
        \nabla_\theta L_\mathrm{cgm}(\theta) = \nabla_\theta L_\mathrm{gm}(\theta). 
    \end{equation*}
    In particular, training against the conditional loss \(L_\mathrm{cgm}\) (which only requires samples from the process \(Y_t\)) recovers the same stationary point as training against the marginal loss \(L_\mathrm{gm}\), which depends on the generally intractable marginal target \(F_t\).
\end{theorem}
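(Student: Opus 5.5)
The approach is the one used for \cite[Theorem~2]{holderrieth2025generatormatchinggenerativemodeling}. Expand the Bregman divergence, show that the two losses differ by a term independent of \(\theta\), and differentiate. Write \(D_{t,x}(a,b) = \phi_{t,x}(a) - \phi_{t,x}(b) - \langle \nabla\phi_{t,x}(b), a-b\rangle_{V_{t,x}}\) for a convex \(\phi_{t,x}\) on \(\Omega_{t,x}\).

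First I rewrite \(L_{\mathrm{cgm}}\) in its disintegrated form from Definition~\ref{def:condgmloss}, using Corollary~\ref{cor:conditional-law}. This gives an outer expectation over \(x_t\sim p_t^{\cX}\) and an inner expectation over \(Y_t\) given \(\Phi(Y_t)=x_t\). For fixed \((t,x_t)\) the inner integrand splits into three pieces:
\begin{itemize}
\item \(\phi_{t,x_t}(F_t^{Y_t}(x_t))\), which does not involve \(\theta\);
\item \(-\phi_{t,x_t}(F_t^\theta(x_t)) + \langle \nabla\phi_{t,x_t}(F_t^\theta(x_t)), F_t^\theta(x_t)\rangle\), which is constant under the conditional law because it depends on \(y_t\) only through \(\Phi(y_t)=x_t\) (the remark after Definition~\ref{def:condlinparam});
\item the term \(-\langle \nabla\phi_{t,x_t}(F_t^\theta(x_t)), F_t^{Y_t}(x_t)\rangle\), which is linear in \(F_t^{Y_t}(x_t)\).
\end{itemize}
Taking conditional expectation and using linearity of the inner product, \eqref{eqn:margfromcond} turns the third term into \(-\langle \nabla\phi_{t,x_t}(F_t^\theta(x_t)), F_t(x_t)\rangle\).

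Comparing with the same expansion of \(D_{t,x_t}(F_t(x_t),F_t^\theta(x_t))\) then gives
\[
L_{\mathrm{cgm}}(\theta) = L_{\mathrm{gm}}(\theta) + C,
\]
where
\[
C = \E_{t,x_t}\bigl[\E[\phi_{t,x_t}(F_t^{Y_t}(x_t))\mid \Phi(Y_t)=x_t] - \phi_{t,x_t}(F_t(x_t))\bigr].
\]
By Jensen's inequality \(C\ge 0\), and \(C\) does not depend on \(\theta\). Hence the gradients agree, and so do the stationary points.

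The main obstacle is integrability: the splitting must not produce \(\infty-\infty\), and \(C\) must be finite. I would avoid splitting blindly. Instead I would note that the pointwise identity
\[
\E[D(F^{Y},b)\mid x] = D(F,b) + \bigl(\E[\phi(F^{Y})\mid x] - \phi(F)\bigr)
\]
holds with both sides in \([0,\infty]\). The left side is integrable by the assumption in Definition~\ref{def:condgmloss}. Both summands on the right are nonnegative: the first because a Bregman divergence is nonnegative, the second by conditional Jensen. So each summand is integrable, which makes \(L_{\mathrm{gm}}\) and \(C\) finite.

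There are two more technical points:
\begin{itemize}
\item \(F_t(x_t)\) must lie in \(\Omega_{t,x_t}\). This holds because \(\Omega_{t,x_t}\) is closed and convex and \(F_t(x_t)\) is a conditional expectation of elements of \(\Omega_{t,x_t}\).
\item The conditional expectation \(\E[F_t^{Y_t}(x_t)\mid\Phi(Y_t)=x_t]\) must be well defined. This needs \(F_t^{Y_t}\) to be Bochner-integrable in \(V_{t,x_t}\); I would take that as implicit in Definition~\ref{def:condlinparam}.
\end{itemize}
Exchanging \(\nabla_\theta\) with the expectation is not needed, since we differentiate an identity of functions of \(\theta\).
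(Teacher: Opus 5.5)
Your proof is correct and rests on the same identity as the paper's Lemma~\ref{lem:bregmangradlemma}, namely $\E[D(X,b)] = D(\E[X],b) + \E[\phi(X)] - \phi(\E[X])$. Like the paper, you apply it to the inner conditional expectation of the disintegrated loss and combine it with \eqref{eqn:margfromcond}. The difference is that you integrate this identity over $(t,x_t)$ before differentiating, which avoids the interchanges of $\nabla_\theta$ with the outer expectations that the paper performs without comment. Your nonnegativity argument also derives finiteness of $C$ and $L_{\mathrm{gm}}$ from the stated hypothesis alone, whereas the paper relies on the Lemma's extra condition $\E[|\phi(X)|]<\infty$.
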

\begin{proof}
    See Appendix~\ref{sec:gradientproof}.
\end{proof}

\subsection{Conditioning on a latent process \(Z_t\)}\label{subsec:latentprocess}
In many applications, the process of interest \(X_t\) evolves jointly with an auxiliary \textit{latent} process \(Z_t\) that is used during training but discarded at generation time. This corresponds to the product-space specialisation of the pushforward framework: we set \(\mathcal{Y} = \mathcal{X} \times \mathcal{Z}\), take \(\Phi = \pi_{\mathcal{X}}\) to be the canonical projection, and write \(Y_t = (X_t, Z_t)\). The pushforward marginals are then \(p_t^{\mathcal{X}} = (\pi_{\mathcal{X}})_\# p_t^{\mathcal{Y}}\), which is simply the \(X\)-marginal of the joint law, and the conditional law \(p_t^{\mathcal{Y}}(dy_t \mid x_t)\) from Corollary~\ref{cor:conditional-law} reduces to the conditional law of \(Z_t\) given \(X_t = x_t\), which we denote \(p_t(dz \mid x_t)\).

In this setting, the definitions of \S\ref{subsec:pushforwardcgm} take the following concrete form. The conditional generator (Definition~\ref{def:condgen}) for \(y_t = (x_t,z_t) \in \mathcal{X} \times \mathcal{Z}\) is
\begin{equation*}
    L_t^{y_t}f(x_t) = W_t(f \circ \pi_{\mathcal{X}})(x_t,z_t).
\end{equation*}
\begin{remark}
    Since \(\Phi = \pi_{\mathcal{X}}\), the evaluation point \(\Phi(y_t) = x_t\) is already determined by the projection, so the dependence of the conditional generator on the \(x_t\)-component of \(y_t = (x_t,z_t)\) is vacuous: the right-hand side \(W_t(f \circ \pi_{\mathcal{X}})(x_t,z_t)\) depends on \(x_t\) only through the test function, not through the conditioning. Accordingly, we write \(L_t^{z_t}\) in place of \(L_t^{y_t}\):
    \begin{equation*}
        L_t^{z_t}f(x_t) := W_t(f \circ \pi_{\mathcal{X}})(x_t,z_t).
    \end{equation*}
\end{remark}
The marginal generator (Corollary~\ref{cor:projectedprocess}) is
\begin{equation*}
    L_tf(x_t) = \E[L_t^{Z_t}f(x_t) \mid X_t = x_t] = \int_{\mathcal{Z}} W_t(f \circ \pi_{\mathcal{X}})(x_t,z) p_t(dz \mid x_t).
\end{equation*}
If the conditional generator admits a linear parametrisation with \(F_t^{z_t}(x_t)\) (Definition~\ref{def:condlinparam}), the marginal target is
\begin{equation*}
    F_t(x_t) = \E[F_t^{Z_t}(x_t) \mid X_t = x_t].
\end{equation*}
The conditional generator matching loss (Definition~\ref{def:condgmloss}) becomes
\begin{equation*}
    L_{\mathrm{cgm}}(\theta) = \E_{t \sim U[0,1], (x_t,z_t) \sim p_t^{\mathcal{X} \times \mathcal{Z}}}[D_{t,x_t}(F_t^{z_t}(x_t), F_t^\theta(x_t))],
\end{equation*}
and Theorem~\ref{thrm:losseshavesamegrads} guarantees \(\nabla_\theta L_{\mathrm{cgm}}(\theta) = \nabla_\theta L_{\mathrm{gm}}(\theta)\). Thus, one may sample jointly from \((X_t, Z_t)\), use the latent state \(Z_t\) to compute the conditional training target \(F_t^{Z_t}\), and train a network \(F_t^\theta\) that receives only \(X_t\) as input. Equivalently, 
\begin{equation*}
    L_{\mathrm{cgm}}(\theta) = \E_{t \sim U[0,1], x_t \sim p_t^{\cX}(dx), z_t \sim p_t^{\mathcal Z}(dz\mid X_t=x_t)}[D_{t,x_t}(F_t^{z_t}(x_t), F_t^\theta(x_t))],
\end{equation*}

\begin{remark}[Fixed-endpoint conditioning with a latent process]\label{rmk:fixedendpoint}
    The conditioning on \(Z_t\) is compatible with additionally conditioning on a \emph{static} latent variable \(Z\) (e.g.\ a pair of endpoints) in the sense of \cite{holderrieth2025generatormatchinggenerativemodeling}. This double conditioning perspective, developed in \S\ref{subsec:doubleconditioning}, provides a natural sampling procedure for the loss.
\end{remark}

\subsection{Fixed-endpoint conditioning with a latent process}\label{subsec:doubleconditioning}

The conditional generator matching loss of Definition~\ref{def:condgmloss} requires sampling from the joint law \(p_t^{\mathcal{Y}}(dy_t)\). In many practical settings, this joint law is itself constructed by first choosing a static latent variable \(z\) (typically an endpoint or pair of endpoints) and then running the process conditionally on \(z\). Indeed, suppose that 
\begin{equation*}
    p_t^{\mathcal{Y}}(dy_t) = \int p^{\mathcal{Y}}(dy_t \mid z) p_{\mathcal Z}(dz),
\end{equation*}
where \(z \in \mathcal{Z}\) is a static random variable (e.g., points used to steer the conditional path) with law \(p_{\mathcal Z}\), and \(p^{\mathcal{Y}}(dy_t \mid z)\) is the conditional law of the process at time \(t\) given \(z\). For each realisation \(z\) and each \(y_t\) in the support of \(p^{\mathcal{Y}}(\cdot \mid z)\), the conditional generator
\begin{equation*}
    L_t^{z, y_t} f(\Phi(y_t)) := W_t^z(f \circ \Phi)(y_t)
\end{equation*}
depends both on the fixed latent state \(z\) and the current process state \(y_t\). This may be viewed from two complementary angles:
\begin{enumerate}[label=\textup{(\roman*)}]
    \item \textit{As a latent-process problem (\S\ref{subsec:latentprocess}):} marginalise over \(z\) to obtain the law \(p_t^{\mathcal{Y}}(dy_t)\). In this case, the conditional generator is \(L_t^{y_t}\) and the framework of \S\ref{subsec:pushforwardcgm} applies directly.
    \item \textit{As a fixed-endpoint problem with process-valued conditionals:} for each \(z\), marginalise over \(y_t \mid z\) to obtain a generator \(L_t^z f(x_t) := \E_{y_t \sim p^{\mathcal{Y}}(\cdot \mid z)}[L_t^{z,y_t}f(x_t)]\) conditioned only on the static latent, recovering the setting of \cite{holderrieth2025generatormatchinggenerativemodeling}. Marginalising further over \(z\) recovers the full marginal generator \(L_t\).
\end{enumerate}
Both viewpoints lead to the same marginal generator and the same gradient equality (Theorem~\ref{thrm:losseshavesamegrads}), but the double-conditioning perspective makes explicit a practical sampling procedure for the conditional loss. Substituting the disintegration into the loss of Definition~\ref{def:condgmloss} gives
\begin{equation}\label{eqn:doublecondsamplingprocedure}
    L_{\mathrm{cgm}}(\theta) = \E_{t \sim U[0,1], z \sim p_{\mathcal Z}(dz), y_t \sim p^{\mathcal{Y}}(dy_t \mid z)}[D_{t,\Phi(y_t)}(F_t^{z,y_t}(\Phi(y_t)), F_t^\theta(\Phi(y_t)))],
\end{equation}
which suggests the following training procedure: \textbf{(1)} sample fixed latents \(z \sim p_{\mathcal Z}\); \textbf{(2)} given \(z\), sample the latent process state \(y_t \sim p^{\mathcal{Y}}(dy_t \mid z)\); \textbf{(3)} compute the conditional target \(F_t^{z,y_t}\) and evaluate the Bregman divergence.

\subsubsection{Edit Flows}\label{subs:subsumeef}
We now show how the theory of Edit Flows \citep{havasi2025editflowsflowmatching} is recovered as a special case of the pushforward conditional generator matching framework of \S\ref{subsec:pushforwardcgm}--\S\ref{subsec:latentprocess}. Let \(\cX\) and \(\cZ\) be finite sets and let \((X_t,Z_t)\) be a CTMC on the joint state space \(\cX \times \cZ\) with rates \(u_t(x',z' \mid x_t, z_t) \geq 0\). Applying the joint generator $W_t$ of $(X_t,Z_t)$ to a test function of the form \(f \circ \pi_{\cX}\) and using Corollary~\ref{cor:projectedprocess}, one can show that
\begin{equation}\label{eqn:efprojectedrates}
    u_t(x' \mid x_t) = \sum_{z'}\E_{z_t \sim p_t(\cdot \mid x_t)}[u_t(x',z' \mid x_t, z_t)] \qquad \text{generates} \qquad p_t^{\cX}(x) = \sum_z p_t(x,z),
\end{equation}
recovering the first part of Theorem~3.1 of \citep{havasi2025editflowsflowmatching}. Moreover, the conditional generator admits a linear parametrisation (Definition~\ref{def:condlinparam}) in which the \(\cZ\)-dependence is confined entirely to the target vector \(F_t^{z_t}(x_t) = (\tilde{u}_t(x' \mid x_t, z_t))_{x' \in \cX}\), where \(\tilde{u}_t(x' \mid x_t, z_t) := \sum_{z'} u_t(x',z' \mid x_t, z_t)\) is the total rate of the \(\cX\)-component jumping to \(x'\). The gradient equality of the conditional and marginal losses (the second part of Theorem~3.1 in \citep{havasi2025editflowsflowmatching}) then follows directly from Theorem~\ref{thrm:losseshavesamegrads}. The complete derivations are given in Appendix~\ref{app:editflows}.

\subsection{Examples}\label{sec:examples}

\subsubsection{A one-dimensional example}\label{subs:1d-example}

We illustrate the latent-process framework of \S\ref{subsec:latentprocess} with a regime-switching diffusion whose conditional paths exhibit discontinuous changes of drift direction, yet whose marginal generator is a standard diffusion generator that can be learnt by a network receiving only the continuous component \(X_t\).
\begin{figure}
    \centering
    \includegraphics[width=1\linewidth]{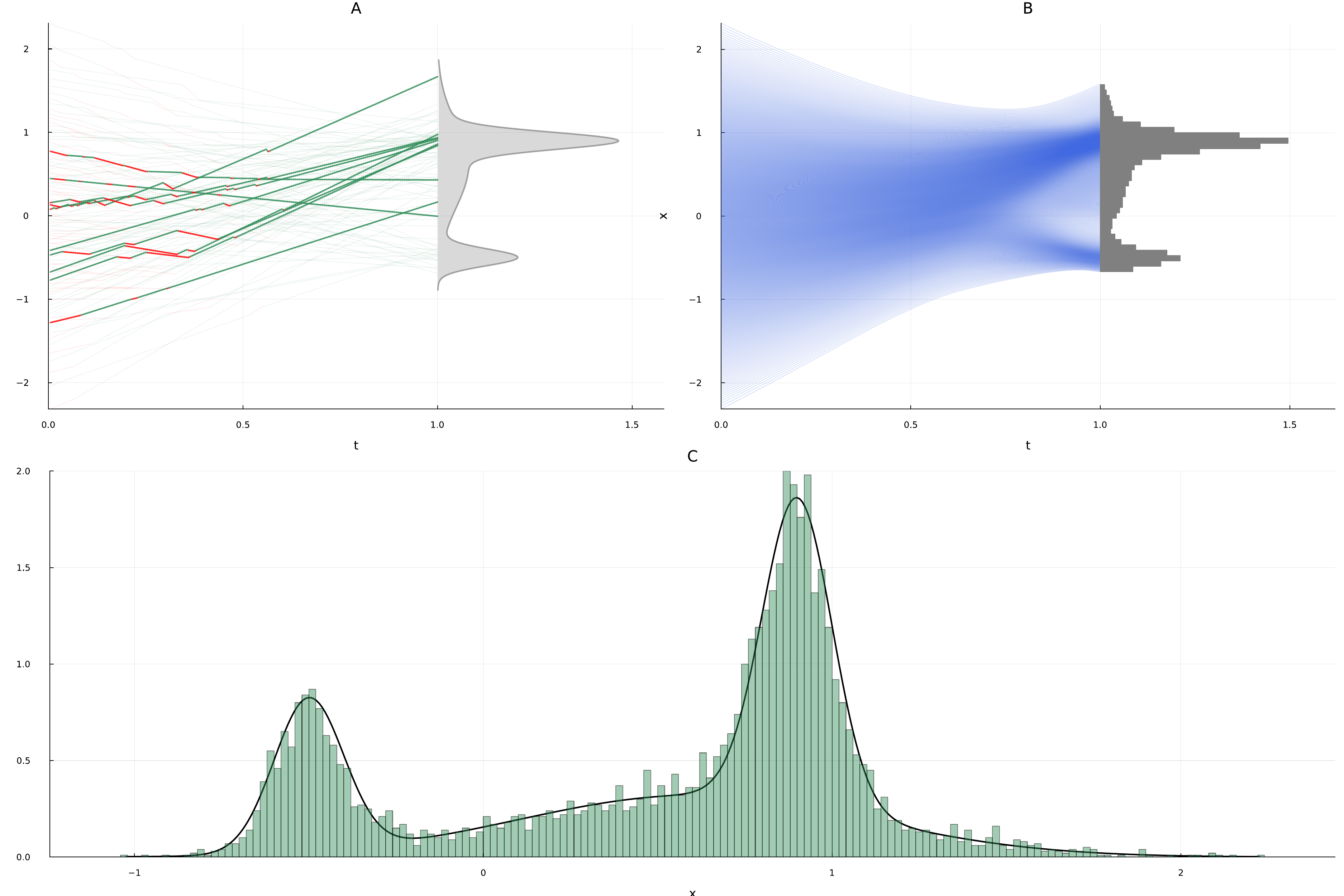}
    \caption{A) Conditional trajectories (training target) with switching, colored by the state of the latent process, where 10 conditional sample paths are foregrounded, B) Model-learnt marginal trajectories, C) Marginal distribution at \(t=1\), with histogram of generated samples matching the target distribution.}
    \label{fig:switchingdriftexample}
\end{figure}
Consider a joint process \((X_t,Z_t)\) on \(\R \times \{-1,+1\}\), with fixed endpoints \(x_0, x_1 \in \R\), evolving according to the following dynamics:
\begin{align*}
    &Z_t \mid (X_t=x_t, X_0=x_0, X_1=x_1) \sim \begin{pmatrix}
        -\lambda_1(x_t,t) & \lambda_1(x_t,t) \\  \lambda_2(x_t,t) & -\lambda_2(x_t,t)
    \end{pmatrix};\\[6pt]
    & X_t \mid (Z_t = z_t, X_0=x_0, X_1=x_1) \sim \frac{z_t x_1 - X_t}{1-t}dt + \sigma_t dB_t.
\end{align*}
where \(B_t\) is a standard Brownian motion. That is, conditional on the continuous component \(X_t\) and the endpoints, \(Z_t\) evolves as a CTMC with rates \(\lambda_1(x_t,t), \lambda_2(x_t,t)\). Conditional on the CTMC value and the endpoints, \(X_t\) evolves as a Brownian bridge towards \(z_tx_1\), that is, towards the target \(x_1\) when \(Z_t = +1\) and towards \(-x_1\) when \(Z_t = -1\). The rates \(\lambda_1,\lambda_2\) are chosen so that \(X_1 = x_1\) a.s.\ in the conditional paths (see Appendix~\ref{app:mixedconddiscrete} for the construction). See Figure~\ref{fig:switchingdriftexample} A for representative trajectories.

Since \(Z_t\) and \(X_t\) evolve in a dependent manner, this falls outside the scope of \cite{holderrieth2025generatormatchinggenerativemodeling}, which treats conditioning on a static latent variable \(Z\). By Corollary~\ref{cor:projectedprocess}, the marginal generator on \(\R\) is nonetheless well-defined, and by Theorem~\ref{thrm:losseshavesamegrads} a network that receives only \(X_t\) may be trained against the conditional loss. This is an instance of the double conditioning of \S\ref{subsec:doubleconditioning}: the static latent is the pair of endpoints \((x_0,x_1)\) and the process-valued latent is \(z_t\). In Appendix~\ref{app:mixedconddiscrete} we derive the conditional generator and construct an \(x_1\)-prediction linear parametrisation (Definition~\ref{def:condlinparam}). Taking the Bregman divergence to be the squared Euclidean norm, the training loss reduces to
\begin{equation*}
    \E_{t \sim U[0,1], (x_0,x_1) \sim q, (x_t, z_t) \sim p_t(\cdot \mid x_0,x_1)}
    [\lVert z_t x_1 - x_1^\theta(x_t)
    \rVert^2],
\end{equation*}
where the latent state \(z_t\) appears in the target but is not passed to the network. At generation time, the velocity is recovered via \(u_t^\theta(x) = (x_1^\theta(x) - x)/(1-t)\), with no reference to the latent dynamics. Figure~\ref{fig:switchingdriftexample} confirms that the learnt marginal paths are smooth, and the switching dynamics of the conditional training target are effectively integrated out, while still retaining the property that the marginal distribution at \(t = 1\) matches the target.

\subsubsection{A regularity-violating example}\label{subs:regularity-violating}

One may ask whether regularity of the base process \(Y_t\) and smoothness of the map \(\Phi\) suffice to guarantee that the pushforward process \(\Phi(Y_t)\) satisfies the KFE sufficiency condition (Assumption~\ref{ass:gm-cX}). The following example shows that this need not be the case, even for \(\Phi \in C^\infty\).

\begin{example}\label{ex:regularity-violating}
    Define \(\Phi \colon \R \to \R_{\geq 0}\) by
    \begin{equation*}
        \Phi(x) := \begin{cases}
            e^{-1/x} & x > 0, \\
            0         & \text{otherwise.}
        \end{cases}
    \end{equation*}
    
    This function belongs to \(C^\infty(\R)\). Let \((B_t)_{t \geq 0}\) be a standard Brownian motion, so that \((B_t, \tfrac{1}{2}\partial_{xx})\) satisfies the Generator Matching regularity conditions of \cite{holderrieth2025generatormatchinggenerativemodeling}. The pushforward \(\Phi(B_t)\) has an atom at \(0\) for every \(t > 0\), since \(\PR[\Phi(B_t) = 0] = \PR[B_t \leq 0] = \tfrac{1}{2}\).

    The corresponding marginal generator $L_t$ satisfies \(L_t f(0) = 0\) for every test function~\(f\). In Appendix~\ref{app:regularity-violating}, we describe a one-parameter family of probability paths satisfying the KFE, violating the uniqueness assumption on the KFE. 
\end{example}

\section{Non-Projection Pushforwards}\label{sec:nonprojection}

All examples in \S\ref{sec:examples} and in the prior work discussed in \S\ref{sec:earlierwork} take $\Phi = \pi_\cX$ to be a canonical projection. \footnote{Any pushforward $\Phi\colon \cY \to \cX$ can formally be subsumed by the projection framework: one embeds $Y_t \mapsto (\Phi(Y_t), Y_t) \in \cX \times \cY$ and projects onto the first coordinate. However, this reduction is artificial. The ``latent'' component becomes the entire original process $Y_t$, the joint law on $\cX \times \cY$ is degenerate, and the geometric structure of $\Phi$ is obscured.}

Protein complexes are formed by multiple `chains`, where residues within a chain are covalently linked. We might wish to train a model via a conditional path that respects this structure. Each residue in a protein backbone can be modelled by a point in $\R^3$ and an orientation in $\SO(3)$ \citep{jumper2021highly}. Further, let each chain have a shared orientation $R_{\mathrm{ch},t} \in \SO(3)$, a centroid $\mu_t \in \R^3$, per-residue internal rotations $R_{\mathrm{res},t}^{(i)} \in \SO(3)$, and per-residue centroid-relative displacements $p_t^{(i)} \in \R^3$. For each chain, the latent space is
\begin{equation*}
    \cY = \SO(3) \times \R^3 \times \SO(3)^n \times (\R^3)^n,
\end{equation*}
and the map to observed (world-frame) coordinates is $\Phi \colon \cY \to \SO(3)^n \times (\R^3)^n$,
\begin{equation*}
    \Phi(R_{\mathrm{ch},t}, \mu_t, (R_{\mathrm{res},t}^{(i)})_i, (p_t^{(i)})_i) = (R_{\mathrm{ch},t} R_{\mathrm{res},t}^{(i)}, R_{\mathrm{ch},t} p_t^{(i)} + \mu_t)_i.
\end{equation*}
The chain rotation $R_{\mathrm{ch},t}$ acts on the internal geometry (orientations and displacements) while leaving the centroid $\mu_t$ invariant. When $Y_t$ follows an SDE on $\cY$, the pushforward through $\Phi$ induces correlated rotational and translational noise at the observed level: all residue orientations are simultaneously rotated, and all positions are coherently tumbled about the centroid. This captures chain-level rigid-body motion, while the per-residue components $R_{\mathrm{res},t}^{(i)}$ and $p_t^{(i)}$ handle internal flexibility. 

\section{Discussion}

Here we have expanded Generator Matching to allow for conditional processes with latent time-dependent variables with rich state spaces and dynamics. Recent examples have demonstrated the utility of discrete-valued time-dependent latent variables, and we anticipate the value of an expanded latent state space. In future work we plan to attempt to relax the regularity conditions for this, deepen our understanding of when they might be violated in real practical examples, and to develop and implement chain-level rotation example from section \ref{sec:nonprojection}.

\section{Acknowledgements}
This project received support from the Swedish Research Council (2024-00390 and 2023-02516) and the Knut and Alice Wallenberg Foundation (2024.0039) to B.M.

\bibliographystyle{plainnat}
\bibliography{iclr2026_delta}

@misc{holderrieth2025generatormatchinggenerativemodeling,
      title={Generator Matching: Generative modeling with arbitrary Markov processes}, 
      author={Peter Holderrieth and Marton Havasi and Jason Yim and Neta Shaul and Itai Gat and Tommi Jaakkola and Brian Karrer and Ricky T. Q. Chen and Yaron Lipman},
      year={2025},
      eprint={2410.20587},
      archivePrefix={arXiv},
      primaryClass={cs.LG},
      url={https://arxiv.org/abs/2410.20587}, 
}

@misc{billera2025timedependentlossreweighting,
      title={Time dependent loss reweighting for flow matching and diffusion models is theoretically justified}, 
      author={Lukas Billera and Hedwig Nora Nordlinder and Ben Murrell},
      year={2025},
      eprint={2511.16599},
      archivePrefix={arXiv},
      primaryClass={stat.ML},
      url={https://arxiv.org/abs/2511.16599}, 
}

@misc{lipman2024flowmatchingguidecode,
      title={Flow Matching Guide and Code}, 
      author={Yaron Lipman and Marton Havasi and Peter Holderrieth and Neta Shaul and Matt Le and Brian Karrer and Ricky T. Q. Chen and David Lopez-Paz and Heli Ben-Hamu and Itai Gat},
      year={2024},
      eprint={2412.06264},
      archivePrefix={arXiv},
      primaryClass={cs.LG},
      url={https://arxiv.org/abs/2412.06264}, 
}

@book{alma991018313432607596,
author = {Bogachev, Vladimir I.},
address = {Berlin, Heidelberg},
booktitle = {Measure Theory},
edition = {1st ed. 2007.},
isbn = {1-280-74570-3},
language = {eng},
publisher = {Springer Berlin Heidelberg},
series = {Measure Theory ; 1},
title = {Measure Theory },
year = {2007},
}

@misc{havasi2025editflowsflowmatching,
      title={Edit Flows: Flow Matching with Edit Operations}, 
      author={Marton Havasi and Brian Karrer and Itai Gat and Ricky T. Q. Chen},
      year={2025},
      eprint={2506.09018},
      archivePrefix={arXiv},
      primaryClass={cs.LG},
      url={https://arxiv.org/abs/2506.09018}, 
}

@misc{billera2025branchingflowsdiscretecontinuous,
      title={Branching Flows: Discrete, Continuous, and Manifold Flow Matching with Splits and Deletions}, 
      author={Lukas Billera and Hedwig Nora Nordlinder and Jack Collier Ryder and Anton Oresten and Aron Stålmarck and Theodor Mosetti Björk and Ben Murrell},
      year={2025},
      eprint={2511.09465},
      archivePrefix={arXiv},
      primaryClass={stat.ML} 
}

@misc{nguyen2025oneflowconcurrentmixedmodalinterleaved,
      title={OneFlow: Concurrent Mixed-Modal and Interleaved Generation with Edit Flows}, 
      author={John Nguyen and Marton Havasi and Tariq Berrada and Luke Zettlemoyer and Ricky T. Q. Chen},
      year={2025},
      eprint={2510.03506},
      archivePrefix={arXiv},
      primaryClass={cs.AI} 
}

@misc{ifriqi2025flowceptiontemporallyexpansiveflow,
      title={Flowception: Temporally Expansive Flow Matching for Video Generation}, 
      author={Tariq Berrada Ifriqi and John Nguyen and Karteek Alahari and Jakob Verbeek and Ricky T. Q. Chen},
      year={2025},
      eprint={2512.11438},
      archivePrefix={arXiv},
      primaryClass={cs.CV}
}

@misc{kim2025anyorderflexiblelengthmasked,
      title={Any-Order Flexible Length Masked Diffusion}, 
      author={Jaeyeon Kim and Lee Cheuk-Kit and Carles Domingo-Enrich and Yilun Du and Sham Kakade and Timothy Ngotiaoco and Sitan Chen and Michael Albergo},
      year={2025},
      eprint={2509.01025},
      archivePrefix={arXiv},
      primaryClass={cs.LG},
      url={https://arxiv.org/abs/2509.01025}, 
}

@misc{albergo2025stochasticinterpolantsunifyingframework,
      title={Stochastic Interpolants: A Unifying Framework for Flows and Diffusions}, 
      author={Michael S. Albergo and Nicholas M. Boffi and Eric Vanden-Eijnden},
      year={2025},
      eprint={2303.08797},
      archivePrefix={arXiv},
      primaryClass={cs.LG},
      url={https://arxiv.org/abs/2303.08797}, 
}

@article{jumper2021highly,
	title = {Highly accurate protein structure prediction with {AlphaFold}},
	volume = {596},
	issn = {},
	url = {},
	doi = {10.1038/s41586-021-03819-2},
	language = {en},
	number = {7873},
	urldate = {2026-01-28},
	journal = {Nature},
	author = {Jumper, John and Evans, Richard and Pritzel, Alexander and Green, Tim and Figurnov, Michael and Ronneberger, Olaf and Tunyasuvunakool, Kathryn and Bates, Russ and Žídek, Augustin and Potapenko, Anna and Bridgland, Alex and Meyer, Clemens and Kohl, Simon A. A. and Ballard, Andrew J. and Cowie, Andrew and Romera-Paredes, Bernardino and Nikolov, Stanislav and Jain, Rishub and Adler, Jonas and Back, Trevor and Petersen, Stig and Reiman, David and Clancy, Ellen and Zielinski, Michal and Steinegger, Martin and Pacholska, Michalina and Berghammer, Tamas and Bodenstein, Sebastian and Silver, David and Vinyals, Oriol and Senior, Andrew W. and Kavukcuoglu, Koray and Kohli, Pushmeet and Hassabis, Demis},
	year = {2021},
	pages = {583--589},
}

\newpage

\appendix

\section{Proof of the Pushforward Kolmogorov Forward Equation}\label{sec:appendix}

In this appendix we collect the measure-theoretic background needed to rigorously define the marginal generator $L_t f(x_t):=\E[W_t(f\circ\Phi)(Y_t)\mid\Phi(Y_t)=x_t]$ and to prove that it governs the Kolmogorov forward equation for the pushforward measure $p_t^{\cX}=\Phi_\# p_t^{\cY}$. All results in \S\ref{subsec:condexp}--\S\ref{subsec:pushforward} are standard and are cited from \cite{alma991018313432607596}. We reproduce precise statements here for completeness.

\begin{remark}[Notation]\label{rmk:notation}
Our notation differs from \cite{alma991018313432607596} in two respects. \paragraph{Conditional expectation.} We use the standard notation $\E[f\mid\B]$ and $\E[f\mid\eta]$ for the conditional expectation of $f$ with respect to a sub-$\sigma$-algebra~$\B$ or the $\sigma$-algebra generated by a random variable~$\eta$, respectively. In~\cite{alma991018313432607596} the same object is written $\mathbb{E}^{\B}f$, or $\mathbb{E}^{\B}_\mu f$.

\paragraph{Markov kernels.} In~\cite{alma991018313432607596}, a transition probability is written $P(\cdot\mid\cdot)\colon X_1\times\B_2\to\R$ with the convention that $P(x\mid B)$ places the conditioning point $x$ in the first slot and the measurable set $B$ in the second. We instead write $\kappa(B\mid x)$, which emphasises the conditional-probability interpretation: $\kappa(\cdot\mid x)$ is a probability measure on~$\B_2$ for each fixed~$x$, while $\kappa(B\mid\cdot)$ is a $\B_1$-measurable function for each fixed~$B$. Integrals against the kernel are written $\int f(y)\kappa(dy\mid x)$.
\end{remark}


\subsection{Conditional expectation: definition and existence}\label{subsec:condexp}
\begin{definition}
  \citep[Def. 2.12.2]{alma991018313432607596} Let \(\mathcal F\) be some collection of functions in \(X\) that have the same codomain. The smallest \(\sigma\)-algebra on \(X\) in which all functions belonging to \(\mathcal F\) are measurable is called the \(\sigma\)-algebra generated by \(\mathcal F\) and is denoted by \(\sigma(\mathcal F)\) (formally, this can be defined as the intersection of all $\sigma$-algebras for which each function in $\mathcal F$ is measurable). In particular, the \(\sigma\)-algebra generated by a random variable \(\eta\) is the smallest \(\sigma\)-algebra in which \(\eta\) is measurable and is denoted by \(\sigma(\eta)\).
\end{definition}

\begin{definition}[Conditional expectation {\cite[Def.~10.1.1]{alma991018313432607596}}]
\label{def:condexp}
Let $(\Omega,\A,\mu)$ be a probability space and let $\B\subset\A$ be a sub-$\sigma$-algebra. For $f\in L^1(\mu)$, a \emph{conditional expectation of $f$ given $\B$} is a $\B$-measurable, $\mu$-integrable function $\E[f\mid\B]$ satisfying
\begin{equation}\label{eq:condexp-def}
\int_\Omega g fd\mu = \int_\Omega g\E[f\mid\B]d\mu
\end{equation}
for every bounded $\B$-measurable function $g$. Equivalently,
\begin{equation}\label{eq:condexp-sets}
\int_B fd\mu = \int_B \E[f\mid\B]d\mu, \qquad \forall B\in\B.
\end{equation}
When $\B=\sigma(\eta)$ is generated by a measurable mapping (i.e.\ random variable) $\eta$, we write $\E[f\mid\eta]$ in place of $\E[f\mid\sigma(\eta)]$.
\end{definition}
\begin{theorem}[Existence and basic properties {\cite[Thm.~10.1.5 (1)-(2)]{alma991018313432607596}}]
\label{thm:condexp-exist}
Let $\mu$ be a probability measure on $(\Omega,\A)$ and let $\B\subset\A$ be a sub-$\sigma$-algebra. For every $f\in L^1(\mu)$ there exists a $\B$-measurable function $\E[f\mid\B]$, unique $\mu$-a.e., such that:
\begin{enumerate}
 \item[\textup{(1)}] $\E[f\mid\B]$ is a conditional expectation of $f$ given $\B$ in the sense of Definition~\ref{def:condexp}.
 \item[\textup{(2)}] If $f$ is already $\B$-measurable and integrable, then $\E[f\mid\B] = f$ $\mu$-a.e. 
\end{enumerate}
\end{theorem}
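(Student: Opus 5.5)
The plan is to obtain $\E[f\mid\B]$ from the Radon--Nikodym theorem applied on the smaller $\sigma$-algebra $\B$, and then to get uniqueness and property (2) from a standard sign argument. First I reduce to $f\ge 0$. Writing $f=f^+-f^-$ with $f^\pm\in L^1(\mu)$ nonnegative, both \eqref{eq:condexp-sets} and \eqref{eq:condexp-def} are linear in $f$. So if $h^\pm$ are conditional expectations of $f^\pm$, then $h^+-h^-$ is one for $f$, and it is integrable and $\B$-measurable.

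For $f\ge 0$ integrable, I define a finite measure on $(\Omega,\B)$ by $\nu(B):=\int_B f\,d\mu$ for $B\in\B$. Countable additivity follows from monotone convergence. Let $\mu|_\B$ be the restriction of $\mu$ to $\B$; it is a probability measure and $\nu\ll\mu|_\B$, since $\mu(B)=0$ implies $\int_B f\,d\mu=0$. The Radon--Nikodym theorem on $(\Omega,\B)$ then gives a $\B$-measurable $h\ge0$ with $\int h\,d\mu|_\B=\nu(\Omega)<\infty$ and $\nu(B)=\int_B h\,d\mu$ for all $B\in\B$.

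A point to be careful about here is that the integral of a $\B$-measurable function against $\mu|_\B$ coincides with its integral against $\mu$. I check this first on indicators of sets in $\B$, then extend it to simple functions and pass to monotone limits. This yields \eqref{eq:condexp-sets}.

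To pass from \eqref{eq:condexp-sets} to \eqref{eq:condexp-def}, I use linearity to extend the identity to $\B$-simple $g$. A bounded $\B$-measurable $g$ is a uniform limit of $\B$-simple $g_n$ with $|g_n|\le\|g\|_\infty$. Dominated convergence, with dominating functions $\|g\|_\infty|f|$ and $\|g\|_\infty|h|$, then closes the argument, and this proves (1).

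For uniqueness, suppose $h_1,h_2$ both satisfy \eqref{eq:condexp-sets}. The set $B:=\{h_1>h_2\}$ lies in $\B$, and $\int_B(h_1-h_2)\,d\mu=0$ with a strictly positive integrand on $B$, so $\mu(B)=0$. By symmetry $\mu(\{h_2>h_1\})=0$, hence $h_1=h_2$ $\mu$-a.e.

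For (2), if $f$ is itself $\B$-measurable and integrable, then $f$ trivially satisfies \eqref{eq:condexp-sets}. Uniqueness therefore gives $\E[f\mid\B]=f$ $\mu$-a.e.

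The only genuinely nontrivial input is the Radon--Nikodym theorem on the sub-$\sigma$-algebra. The main point requiring care is that the density must be taken with respect to $\mu|_\B$ rather than $\mu$; this is what makes $h$ $\B$-measurable.
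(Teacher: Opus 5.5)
Your proof is correct and is the standard one: the Radon--Nikodym theorem applied to $B\mapsto\int_B f\,d\mu$ against $\mu|_{\B}$ gives existence, the sign-set argument on $\{h_1>h_2\}$ gives a.e.\ uniqueness, and uniqueness then gives (2). The paper gives no proof of its own here and simply cites Bogachev's Theorem~10.1.5, which establishes existence by this same Radon--Nikodym construction, so your route matches the cited source.
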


\subsection{The tower property}\label{subsec:tower}

\begin{proposition}[Tower property {\cite[Eq.~(10.1.4)]{alma991018313432607596}}]
\label{prop:tower}
Let $\B_1\subset\B\subset\A$ be sub-$\sigma$-algebras. Then, for every $f\in L^1(\mu)$,
\begin{equation}\label{eq:tower}
\E[\E[f\mid\B]\mid\B_1] = \E[f\mid\B_1] = \E[\E[f\mid\B_1]\mid\B], \qquad \mu\text{-a.e.}
\end{equation}
\end{proposition}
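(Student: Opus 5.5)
The statement immediately above is the tower property for conditional expectations with nested sub-$\sigma$-algebras $\B_1\subset\B\subset\A$. I will derive both equalities directly from the defining identity \eqref{eq:condexp-sets} together with the a.e.\ uniqueness in Theorem~\ref{thm:condexp-exist}. The strategy is to show that a given candidate function satisfies the characterising property of $\E[f\mid\B_1]$. Before doing so I must check that every conditional expectation appearing is well defined. Since $|\E[f\mid\B]|\le\E[|f|\mid\B]$ a.e., integrating gives $\E[f\mid\B]\in L^1(\mu)$. This is the only mildly technical point, and it can be handled by splitting $f=f^+-f^-$ and using monotonicity of conditional expectation, which follows from \eqref{eq:condexp-sets} applied to the sets $\{\E[f^\pm\mid\B]<0\}\in\B$.

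\textbf{First equality.} Put $h:=\E[\E[f\mid\B]\mid\B_1]$, which is $\B_1$-measurable and integrable. Take any $B\in\B_1$. Applying \eqref{eq:condexp-sets} for $\B_1$ and then, since $B\in\B_1\subset\B$, for $\B$, I get
\[
\int_B h\,d\mu=\int_B \E[f\mid\B]\,d\mu=\int_B f\,d\mu .
\]
So $h$ satisfies the defining property of $\E[f\mid\B_1]$. By the a.e.\ uniqueness in Theorem~\ref{thm:condexp-exist}, $h=\E[f\mid\B_1]$ $\mu$-a.e.

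\textbf{Second equality.} The function $\E[f\mid\B_1]$ is $\B_1$-measurable, hence $\B$-measurable because $\B_1\subset\B$, and it is integrable. Theorem~\ref{thm:condexp-exist}(2) then gives $\E[\E[f\mid\B_1]\mid\B]=\E[f\mid\B_1]$ $\mu$-a.e.

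The main obstacle is not these two steps but the bookkeeping: integrability of the intermediate conditional expectations, and the fact that all equalities hold only $\mu$-a.e., so I must confirm that modifying a version on a null set does not affect the defining integrals. Alternatively, since the result is standard, one may simply cite \citep[Eq.~(10.1.4)]{alma991018313432607596}.
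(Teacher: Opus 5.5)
Your argument is correct. The paper gives no proof of its own and simply cites \cite[Eq.~(10.1.4)]{alma991018313432607596}; your verification of the defining identity on sets $B\in\B_1\subset\B$, followed by a.e.\ uniqueness for the first equality and Theorem~\ref{thm:condexp-exist}(2) for the second, is the standard argument that citation stands for. Your preliminary integrability check is harmless but unnecessary, since Definition~\ref{def:condexp} already requires $\E[f\mid\B]$ to be $\mu$-integrable.
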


\subsection{The Doob--Dynkin lemma}\label{subsec:doobdynkin}

The conditional expectation $\E[f\mid\eta]$ is defined abstractly as any $\sigma(\eta)$-measurable random variable satisfying the integral identity \(\int_B fd\mu = \int_B \E[f\mid\B]d\mu, \) for all \(B \in \B\). This means $\E[f\mid\eta]$ is determined only up to a set of measure zero, and \emph{a priori} it is not clear that it can be written as a deterministic function of the observed value of~$\eta$.
The Doob--Dynkin lemma resolves this by characterising \emph{exactly} which random variables are $\sigma(\eta)$-measurable (they are the Borel functions of~$\eta$).

\begin{theorem}[Doob--Dynkin {\cite[Thm.~2.12.3]{alma991018313432607596}}]
\label{thm:doob-dynkin}
Let $I$ be a countable index set and $\{f_i\}_{i\in I}$ be a family of measurable functions on a nonempty space $X$. A function $g$ on $X$ is measurable with respect to $\sigma(\{f_i\}_{i\in I})$ if and only if there exists a Borel-measurable function $\psi\colon\R^I\to\R$ such that 
\[
g(x) = \psi(f_{i_1}(x),f_{i_2}(x),\ldots).
\]
\end{theorem}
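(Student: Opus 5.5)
The plan is to reduce the statement to a single measurable map. Set $F\colon X\to\R^I$, $F(x):=(f_i(x))_{i\in I}$, and equip $\R^I$ with its Borel $\sigma$-algebra. Since $I$ is countable, $\R^I$ with the product topology is separable and metrisable, so its Borel $\sigma$-algebra coincides with the product $\sigma$-algebra generated by the coordinate projections. The first step is to show
\[
\sigma(\{f_i\}_{i\in I}) = \{F^{-1}(B) : B\in\B(\R^I)\}.
\]
The right-hand side is a $\sigma$-algebra, because preimages commute with complements and countable unions. It contains each generating set $f_i^{-1}(A)=F^{-1}(\pi_i^{-1}(A))$, so it contains $\sigma(\{f_i\})$. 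Conversely, the sets $B$ with $F^{-1}(B)\in\sigma(\{f_i\})$ form a $\sigma$-algebra containing all cylinder sets $\pi_i^{-1}(A)$. Hence this family contains the product $\sigma$-algebra, which equals $\B(\R^I)$.

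The ``if'' direction is then immediate: $\psi\circ F$ is a composition of the $\sigma(\{f_i\})/\B(\R^I)$-measurable map $F$ with the Borel map $\psi$. For the ``only if'' direction I would use the standard measure-theoretic induction on $g$.

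\begin{enumerate}
\item \emph{Indicators.} If $g=\1_A$ with $A\in\sigma(\{f_i\})$, the first step gives $A=F^{-1}(B)$ for some Borel $B$, and we take $\psi=\1_B$.
\item \emph{Simple functions.} By linearity, every $\sigma(\{f_i\})$-measurable simple function has the form $\psi\circ F$ with $\psi$ Borel and simple.
\item \emph{General real-valued $g$.} Choose simple $\sigma(\{f_i\})$-measurable $g_n$ with $g_n\to g$ pointwise, for instance the usual dyadic truncations of $g^+$ and $g^-$. Write $g_n=\psi_n\circ F$ with each $\psi_n$ Borel.
\end{enumerate}

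The main obstacle is the passage to the limit in the last case. The sequence $\psi_n$ is only guaranteed to converge on the image $F(X)$, and that image need not be Borel. I would avoid this by not working with $F(X)$ at all. Instead, let $C:=\{z\in\R^I : \lim_n\psi_n(z)\text{ exists in }\R\}$. This set is Borel, since
\[
C=\{z : \limsup_n\psi_n(z)=\liminf_n\psi_n(z)\}\cap\{z: \limsup_n\psi_n(z)\in\R\},
\]
and both $\limsup_n\psi_n$ and $\liminf_n\psi_n$ are Borel. Define $\psi:=\lim_n\psi_n$ on $C$ and $\psi:=0$ off $C$; this $\psi$ is Borel.

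To finish, note that for every $x\in X$ the sequence $\psi_n(F(x))=g_n(x)$ converges to $g(x)$. Therefore $F(x)\in C$ and $\psi(F(x))=g(x)$, which is the required representation $g(x)=\psi(f_{i_1}(x),f_{i_2}(x),\ldots)$.
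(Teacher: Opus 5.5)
Your proof is correct. The paper gives no proof of its own and simply cites Bogachev's Theorem~2.12.3; your argument is the standard proof of that result. It identifies $\sigma(\{f_i\}_{i\in I})$ with $F^{-1}(\B(\R^I))$, extends from indicators to simple functions, and passes to pointwise limits through the Borel convergence set of the $\psi_n$ rather than the possibly non-Borel image $F(X)$.
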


Since $\E[f\mid\eta]$ is $\sigma(\eta)$-measurable by definition, the theorem (applied to the single-function family $\{f_i\}=\{\eta\}$) guarantees the existence of a Borel-measurable function $h_f\colon Y\to\R$ such that
\[
\E[f\mid\eta](\omega) = h_f(\eta(\omega)), \qquad \PR\text{-a.e.}\ \omega.
\]
The conditional expectation, which is \emph{a priori} just some random variable on $\Omega$, is in fact a deterministic function of the observed value $\eta(\omega)$. This is what justifies the pointwise notation
\[
\E[f\mid\eta=x] := h_f(x),
\]
which evaluates the conditional expectation at a specific value $x\in Y$ of the conditioning variable. The function $h_f$ is unique up to redefinition on a set of $(\PR\circ\eta^{-1})$-measure zero.

\subsection{Transition probabilities (Markov kernels)}\label{subsec:kernel}

\begin{definition}[Transition probability {\cite[Def.~10.7.1]{alma991018313432607596}}]
\label{def:kernel}
Let $(X_1,\B_1)$ and $(X_2,\B_2)$ be measurable spaces. A \emph{transition probability} (or \emph{Markov kernel}) for this pair is a function $\kappa\colon \B_2\times X_1\to[0,1]$ such that:
\begin{enumerate}
 \item[\textup{(i)}] for every fixed $x\in X_1$, the map $B\mapsto \kappa(B\mid x)$ is a probability measure on $\B_2$;
 \item[\textup{(ii)}] for every fixed $B\in\B_2$, the map $x\mapsto \kappa(B\mid x)$ is $\B_1$-measurable.
\end{enumerate}
\end{definition}

\subsection{The kernel Fubini theorem}\label{subsec:kernelfubini}

\begin{theorem}[Kernel Fubini {\cite[Thm.~10.7.2]{alma991018313432607596}}]
\label{thm:kernel-fubini}
Let $\kappa$ be a transition probability for $(X_1,\B_1)$ and $(X_2,\B_2)$, and let $\nu$ be a probability measure on $\B_1$. Then there exists a unique probability measure $\mu$ on $(X_1\times X_2,\B_1\otimes\B_2)$ satisfying
\begin{equation}\label{eq:kernel-prod}
\mu(B_1\times B_2) = \int_{B_1} \kappa(B_2\mid x)\nu(dx), \qquad \forall B_1\in\B_1, B_2\in\B_2.
\end{equation}
Moreover, for every $f\in L^1(\mu)$, the iterated integral is well defined and
\begin{equation}\label{eq:kernel-fubini}
\int_{X_1\times X_2} f(x_1,x_2)\mu(d(x_1,x_2))
=
\int_{X_1}\left(\int_{X_2} f(x_1,x_2)\kappa(dx_2\mid x_1)\right)\nu(dx_1).
\end{equation}
\end{theorem}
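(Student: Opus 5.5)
We prove the statement directly, defining $\mu$ through sections rather than through Carath\'eodory extension, so that existence and the integral formula come out of a single measurability argument. For $C\in\B_1\otimes\B_2$ and $x\in X_1$ write $C_x:=\{x_2\in X_2:(x,x_2)\in C\}$, and set
\[
\mu(C):=\int_{X_1}\kappa(C_x\mid x)\,\nu(dx).
\]
For this to make sense, two things are needed: each section $C_x$ must lie in $\B_2$, and the map $x\mapsto\kappa(C_x\mid x)$ must be $\B_1$-measurable.

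\textbf{Step 1 (measurability of sections).} We use a monotone class argument. Let $\mathcal{M}$ be the collection of $C\in\B_1\otimes\B_2$ such that $C_x\in\B_2$ for all $x$ and $x\mapsto\kappa(C_x\mid x)$ is $\B_1$-measurable.
\begin{itemize}
\item For a rectangle $C=B_1\times B_2$ we have $\kappa(C_x\mid x)=\mathbf 1_{B_1}(x)\,\kappa(B_2\mid x)$. This is measurable by property (ii) of Definition~\ref{def:kernel}, so $\mathcal{M}$ contains the $\pi$-system of measurable rectangles.
\item $\mathcal{M}$ is closed under proper differences $D\setminus C$ with $C\subset D$, because then $\kappa((D\setminus C)_x\mid x)=\kappa(D_x\mid x)-\kappa(C_x\mid x)$, using that $\kappa(\cdot\mid x)$ is a finite measure.
\item $\mathcal{M}$ is closed under increasing unions, because $\kappa(\cdot\mid x)$ is continuous from below and pointwise limits of measurable functions are measurable.
\end{itemize}
So $\mathcal{M}$ is a Dynkin system containing the rectangles, and Dynkin's $\pi$--$\lambda$ theorem gives $\mathcal{M}=\B_1\otimes\B_2$. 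This is the step I expect to be the main technical point. Everything else is routine once the section map is known to be measurable.

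\textbf{Step 2 (existence and uniqueness of $\mu$).} Countable additivity of $\mu$ follows from countable additivity of each $\kappa(\cdot\mid x)$ together with monotone convergence for the $\nu$-integral. Also $\mu(X_1\times X_2)=\int 1\,d\nu=1$, so $\mu$ is a probability measure. For a rectangle, $\mu(B_1\times B_2)=\int_{B_1}\kappa(B_2\mid x)\,\nu(dx)$, which is exactly \eqref{eq:kernel-prod}. Uniqueness holds because two probability measures that agree on the $\pi$-system of rectangles agree on the $\sigma$-algebra it generates, again by the $\pi$--$\lambda$ theorem.

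\textbf{Step 3 (the integral identity \eqref{eq:kernel-fubini}).} We argue by the standard bootstrap.
\begin{itemize}
\item For $f=\mathbf 1_C$, the identity is the definition of $\mu(C)$, and the inner integral is measurable in $x_1$ by Step~1.
\item By linearity it extends to nonnegative simple functions.
\item For nonnegative measurable $f$, approximate by increasing simple functions and apply monotone convergence twice, first inside the $\kappa(\cdot\mid x_1)$-integral and then in the $\nu$-integral. Inner integrals of simple functions are measurable, so their monotone limit is measurable too.
\item For $f\in L^1(\mu)$, write $f=f^+-f^-$. The nonnegative case gives $\int\!\int |f|\,\kappa(dx_2\mid x_1)\,\nu(dx_1)<\infty$. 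Hence the inner integral of $|f|$ is finite for $\nu$-a.e.\ $x_1$, and there the inner integral of $f$ is well defined. On the remaining $\nu$-null set we define it as $0$, which keeps it measurable.
\end{itemize}
Subtracting the identities for $f^+$ and $f^-$ then yields \eqref{eq:kernel-fubini}.
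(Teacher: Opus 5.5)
Your proof is correct. It is the standard argument (measurable sections via the $\pi$--$\lambda$ theorem, $\mu$ defined through sections, then indicator $\to$ simple $\to$ nonnegative $\to$ integrable), which is essentially the proof behind the cited \cite[Thm.~10.7.2]{alma991018313432607596}; the paper itself gives no proof, only this citation. One caveat: Step~3 treats $f$ as $\B_1\otimes\B_2$-measurable. If $L^1(\mu)$ is read in Bogachev's sense of functions measurable for the $\mu$-completion, you need one more line. Replace $f$ by a $\B_1\otimes\B_2$-measurable $g$ that agrees with it off a $\mu$-null set $N\in\B_1\otimes\B_2$, and note that $\int\kappa(N_x\mid x)\,\nu(dx)=\mu(N)=0$. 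This is harmless for the paper's use with Borel integrands such as $\varphi=W_t(f\circ\Phi)$.
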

\begin{remark}
    In the above, recall that \(\B_1 \otimes \B_2\) is the product $\sigma$-algebra generated by the collection of rectangles $\{B_1 \times B_2 : B_1 \in \B_1, B_2 \in \B_2\}$.
\end{remark}

\subsection{Existence of the conditional law as a transition probability}\label{subsec:rcd}

\begin{theorem}[Regular conditional distribution {\cite[Ex.~10.7.5]{alma991018313432607596}}]
\label{thm:rcd}
Let $(\Omega,\A,\PR)$ be a probability space and let $S$ be a Souslin space (that is, $S$ is the image of a Polish (complete, separable, metrisable) space under a continuous mapping) and let $(Y,\A_Y)$ be a measurable space, and let
\[
\xi\colon(\Omega,\A)\to(S,\B(S)),\qquad
\eta\colon(\Omega,\A)\to(Y,\A_Y)
\]
be measurable mappings (i.e., $\xi$ and $\eta$ are random variables taking values in $S$ and $Y$, respectively). Then there exists a transition probability (Markov kernel)
\[
\kappa\colon \B(S)\times Y\to[0,1],\qquad (B,y)\longmapsto\kappa(B\mid y),
\]
such that for every $B\in\B(S)$,
\begin{equation}\label{eq:rcd}
\PR(\xi\in B\mid\eta) = \kappa(B\mid\eta), \qquad \PR\text{-a.e.}
\end{equation}
Moreover, the family $\{\kappa(\cdot\mid y)\}_{y\in Y}$ is unique up to modification on a set of $(\PR\circ\eta^{-1})$-measure zero.
\end{theorem}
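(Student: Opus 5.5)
The plan is to first construct the kernel when $S=\R$, and then transfer it to a general Souslin space through a Borel injection into $\R$, using tightness to deal with the fact that the image of $S$ need not be Borel. Uniqueness will follow from a countable generating $\pi$-system.

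\textbf{Step 1 (the case $S=\R$).} For each rational $r$, take the conditional expectation $\E[\mathbf 1_{\{\xi\le r\}}\mid\eta]$ from Theorem~\ref{thm:condexp-exist}. By Theorem~\ref{thm:doob-dynkin} it has the form $F(r,\eta)$ for some measurable $F(r,\cdot)$ on $Y$. Monotonicity of conditional expectation, together with conditional monotone/dominated convergence, gives the following properties, each holding $(\PR\circ\eta^{-1})$-a.e.:
\begin{itemize}
\item $F(r,y)\le F(s,y)$ for rationals $r<s$;
\item $F(r+1/n,y)\to F(r,y)$ as $n\to\infty$;
\item $F(-n,y)\to0$ and $F(n,y)\to1$.
\end{itemize}
Since there are only countably many such conditions, the exceptional sets combine into one measurable null set $N\subset Y$. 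For $y\notin N$, set $\tilde F(x,y):=\inf_{r>x,\ r\in\mathbb Q}F(r,y)$. This is a distribution function, so it defines a Lebesgue--Stieltjes probability $\kappa(\cdot\mid y)$. For $y\in N$, set $\kappa(\cdot\mid y):=\delta_0$.

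Measurability of $y\mapsto\kappa(B\mid y)$ holds for half-lines by construction, and extends to all Borel $B$ by a monotone class argument. The identity $\PR(\xi\in B\mid\eta)=\kappa(B\mid\eta)$ holds for half-lines; both sides are finite measures in $B$, so it extends to all of $\B(\R)$ by the $\pi$--$\lambda$ theorem.

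\textbf{Step 2 (transfer to a Souslin $S$).} The Borel $\sigma$-algebra of a Souslin space is countably generated by a point-separating sequence of Borel sets $(A_k)$. The map $h(s):=\sum_k 2\cdot 3^{-k}\mathbf 1_{A_k}(s)$ is then an injective Borel map $S\to[0,1]$. Apply Step~1 to $h(\xi)$ to get a kernel $\tilde\kappa$ on $[0,1]$.

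The difficulty is that $h(S)$ is only a Souslin set, so it need not be Borel, and we need the kernel to be carried by it. To handle this:
\begin{itemize}
\item Use that the law of $\xi$ on a Souslin space is Radon. Choose compact $K_n\uparrow$ with $\PR(\xi\in K_n)\to1$.
\item By the Souslin--Lusin theorem, an injective Borel image of a Borel subset of a Souslin space is Borel, so each $h(K_n)$ is Borel.
\item Then $\E[\tilde\kappa(h(K_n)\mid\eta)]=\PR(\xi\in K_n)\to1$, and $\tilde\kappa(h(K_n)\mid y)$ is increasing in $n$. Hence $\tilde\kappa\bigl(\bigcup_n h(K_n)\mid y\bigr)=1$ off a further null set $N'$.
\end{itemize}
For $y\notin N\cup N'$, define $\kappa(B\mid y):=\tilde\kappa(h(B)\mid y)$; this is legitimate because $h(B)$ is Borel by the same theorem, and $h^{-1}$ is Borel on $\bigcup_n h(K_n)$. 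On the exceptional set, put a fixed Dirac mass. Property (ii) of Definition~\ref{def:kernel} and the identity \eqref{eq:rcd} are then inherited from Step~1, since $\{\xi\in B\}=\{h(\xi)\in h(B)\}$.

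\textbf{Step 3 (uniqueness).} Suppose $\kappa$ and $\kappa'$ both satisfy \eqref{eq:rcd}. Then for each $B$ in a countable generating $\pi$-system (finite intersections of the $A_k$), $\kappa(B\mid\cdot)=\kappa'(B\mid\cdot)$ a.e. Taking the countable union of the exceptional sets gives a single null set off which the two measures agree on the $\pi$-system, and hence on all of $\B(S)$ by the $\pi$--$\lambda$ theorem.

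I expect Step~2 to be the main obstacle: it relies on the descriptive set theory input (the Souslin--Lusin theorem) and on Radon-ness of the law of $\xi$ to ensure the constructed kernel is carried by $h(S)$.
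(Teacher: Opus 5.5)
The paper does not prove this statement. It imports it from Bogachev (Ex.~10.7.5), where it follows from the general existence theory for regular conditional measures, using compact approximating classes and the fact that Borel probability measures on Souslin spaces are Radon. Your route is a legitimate and more explicit alternative: build the kernel from distribution functions on $\R$, then transfer it through a Borel injection $h\colon S\to[0,1]$ with a tightness argument. Steps~1 and~3 are sound.

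One small point in Step~1: $B\mapsto\PR(\xi\in B\mid\eta)$ is not pointwise a measure. The $\pi$--$\lambda$ argument should therefore be run on the class of $B$ for which the a.s.\ identity holds, or on the integrated identities $\int_{\{\eta\in A\}}\kappa(B\mid\eta)\,d\PR=\PR(\xi\in B,\eta\in A)$.

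\textbf{The gap is in Step~2,} exactly where you flagged the difficulty. You assert that ``an injective Borel image of a Borel subset of a Souslin space is Borel'' and use it to define $\kappa(B\mid y):=\tilde\kappa(h(B)\mid y)$ for every $B\in\B(S)$. This is false for Souslin domains, and your own remark shows it: taking $B=S$, it would make $h(S)$ Borel. Concretely, let $S$ be a non-Borel analytic subset of $[0,1]$ and let the $A_k$ be traces of rational intervals. Then $h$ is the restriction of an injective Borel map $H$ on $[0,1]$, and $H(S)$ Borel would force $S=H^{-1}(H(S))$ to be Borel. So $\tilde\kappa(h(B)\mid y)$ is undefined in general.

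The Lusin--Souslin theorem needs a Polish (or Lusin) domain. It does apply to $h|_{K_n}$, but only because compact subsets of a Souslin space are metrizable. You need to state that fact, since it is also the only valid justification that $h(K_n)$ is Borel.

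\textbf{Repair.} Set $K_\infty:=\bigcup_n K_n$ and define, for $y\notin N\cup N'$, $\kappa(B\mid y):=\tilde\kappa\bigl(h(B\cap K_\infty)\mid y\bigr)$. Then:
\begin{itemize}
\item each $h(B\cap K_n)$ is Borel by Lusin--Souslin on the compact metrizable $K_n$, so $h(B\cap K_\infty)$ is Borel;
\item $B\mapsto h(B\cap K_\infty)$ preserves countable disjoint unions, by injectivity;
\item $\kappa(S\mid y)=\tilde\kappa(h(K_\infty)\mid y)=1$;
\item $\{h(\xi)\in h(B\cap K_\infty)\}=\{\xi\in B\cap K_\infty\}$ differs from $\{\xi\in B\}$ only inside the null set $\{\xi\notin K_\infty\}$.
\end{itemize}
Hence measurability in $y$ and the a.e.\ identity $\PR(\xi\in B\mid\eta)=\kappa(B\mid\eta)$ go through as you intended. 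With this change your proof is complete. The price of your route is the descriptive-set-theoretic input (metrizability of compacta in Souslin spaces, and Lusin--Souslin). What it buys is a fully explicit construction that reduces everything to distribution functions on $\R$.
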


\subsection{Agreement of the kernel and $\sigma$-algebra definitions}\label{subsec:agreement}

The $\sigma$-algebra formalism produces the conditional expectation $\E[f(\xi)\mid\eta]$ as an abstract $\sigma(\eta)$-measurable random variable, determined only $\PR$-a.e. The Doob--Dynkin lemma (Theorem~\ref{thm:doob-dynkin}) guarantees a Borel function $h_f$ with $\E[f(\xi)\mid\eta](\omega)=h_f(\eta(\omega))$, and one \emph{defines} $\E[f(\xi)\mid\eta=y]:=h_f(y)$. On the other hand, the kernel from Theorem~\ref{thm:rcd} gives a pointwise formula $y\mapsto\int_S fd\kappa(\cdot\mid y)$. The following proposition, which is a special case of a general result connecting regular conditional measures to conditional expectations \cite[Prop.~10.4.18]{alma991018313432607596}, shows that the two coincide.

\begin{proposition}[Kernel representation of conditional expectation {\cite[Prop.~10.4.18, Eq.~(10.4.13)]{alma991018313432607596}}]
\label{prop:kernel-sigma-agree}
In the setting of Theorem~\ref{thm:rcd}, for every $f\in L^1(\PR)$ of the form $f=\varphi\circ\xi$ with $\varphi$ a Borel function on $S$, one has
\begin{equation}\label{eq:condexp-kernel}
\E[\varphi(\xi)\mid\eta=y] = \int_S \varphi(s)\kappa(ds\mid y), \qquad (\PR\circ\eta^{-1})\text{-a.e.\ } y.
\end{equation}
\end{proposition}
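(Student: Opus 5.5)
The plan is to show that the function $g(y):=\int_S\varphi(s)\,\kappa(ds\mid y)$ gives a version of the conditional expectation $\E[\varphi(\xi)\mid\eta]$ in the sense of Definition~\ref{def:condexp}. Once that is done, uniqueness (Theorem~\ref{thm:condexp-exist}) and the Doob--Dynkin representation (Theorem~\ref{thm:doob-dynkin}) give $h_f=g$ almost everywhere with respect to $\PR\circ\eta^{-1}$.

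Concretely, I will prove two things.
\begin{enumerate}[label=\textup{(\alph*)}]
\item The map $y\mapsto g(y)$ is $\A_Y$-measurable and $g\circ\eta\in L^1(\PR)$.
\item For every $A\in\A_Y$,
\begin{equation*}
\int_{\{\eta\in A\}}\varphi(\xi)\,d\PR=\int_{\{\eta\in A\}} g(\eta)\,d\PR .
\end{equation*}
\end{enumerate}
Since the sets $\{\eta\in A\}$ exhaust $\sigma(\eta)$, (b) is exactly \eqref{eq:condexp-sets} with $\B=\sigma(\eta)$.

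I will build up $\varphi$ in the usual stages.
\begin{enumerate}[label=\textup{(\roman*)}]
\item \emph{Indicators.} Take $\varphi=\mathbf 1_B$ with $B\in\B(S)$. Then $g(y)=\kappa(B\mid y)$, which is measurable by property (ii) of Definition~\ref{def:kernel}. Identity (b) is precisely \eqref{eq:rcd} of Theorem~\ref{thm:rcd}, integrated over $\{\eta\in A\}$ via \eqref{eq:condexp-sets}.
\item \emph{Nonnegative simple functions.} Both (a) and (b) follow by linearity, since only finitely many a.e.\ statements are involved.
\item \emph{Nonnegative Borel $\varphi$.} Choose simple $0\le\varphi_n\uparrow\varphi$. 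Then $g_n(y)\uparrow g(y)$ for every $y$ by monotone convergence applied to each probability measure $\kappa(\cdot\mid y)$. This makes $g$ measurable as a pointwise limit with values in $[0,\infty]$. Applying monotone convergence again, on both sides of (b) for $\varphi_n$, yields (b) for $\varphi$.
\item \emph{Integrability.} Taking $A=Y$ in step (iii) gives $\int g(\eta)\,d\PR=\E[\varphi(\xi)]<\infty$ when $\varphi(\xi)\in L^1$. Hence $g<\infty$ for $\PR\circ\eta^{-1}$-a.e.\ $y$.
\item \emph{General $\varphi$.} Write $\varphi=\varphi^+-\varphi^-$ and apply steps (iii)--(iv) to each part. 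Define $g:=g^+-g^-$ off the null set where either part is infinite, and set it to $0$ there. Subtracting the two identities gives (b).
\end{enumerate}

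For the conclusion, Doob--Dynkin provides $h_f$ with $\E[\varphi(\xi)\mid\eta]=h_f(\eta)$ a.e., and $g(\eta)$ is another $\sigma(\eta)$-measurable version. Uniqueness in Theorem~\ref{thm:condexp-exist} gives $h_f(\eta)=g(\eta)$ $\PR$-a.e. The set $N:=\{y:h_f(y)\neq g(y)\}$ lies in $\A_Y$, because both functions are measurable. Since $\PR(\eta\in N)=0$, we get $(\PR\circ\eta^{-1})(N)=0$, which is \eqref{eq:condexp-kernel}.

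The step I expect to need most care is the passage from indicators to general $\varphi$ in step (iii). Theorem~\ref{thm:rcd} only gives an a.e.\ identity for each fixed $B$, and the exceptional null set depends on $B$, so one cannot take pointwise limits through these a.e.\ statements. I will avoid this by passing to the limit only in the \emph{integrated} identity (b) and in the everywhere-defined kernel integrals $g_n(y)$. Neither of these involves the $B$-dependent null sets, so no uncountable union of null sets ever arises.
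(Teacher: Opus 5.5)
Your proof is correct and follows the route the paper indicates. The paper only cites Bogachev's general result and notes that it is a consequence of the integral identity for regular conditional measures; that identity is your (b) on sets $\{\eta\in A\}$, and combined with a.e.\ uniqueness of conditional expectation it gives the statement, so you have written out in full, directly from the defining property of $\kappa$ in Theorem~\ref{thm:rcd}, what the paper defers to the reference (and you correctly avoid the $B$-dependent null sets by passing to limits only in integrated identities and in the everywhere-defined kernel integrals).
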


This is a consequence of the integral identity for regular conditional measures \cite[Thm.~10.4.5]{alma991018313432607596}. 

\subsection{Pushforward and change of variables}\label{subsec:pushforward}

\begin{theorem}[Change of variables / pushforward {\cite[Thm.~3.6.1]{alma991018313432607596}}]
\label{thm:pushforward}
Let $\mu$ be a nonnegative measure on $(X,\A)$ and $\Phi\colon X\to Y$ an $(\A,\B)$-measurable mapping. A $\B$-measurable function $g$ on $Y$ is integrable with respect to the image (pushforward) measure $\Phi_\#\mu := \mu\circ\Phi^{-1}$ if and only if $g\circ\Phi$ is $\mu$-integrable. In that case,
\begin{equation}\label{eq:pushforward}
\int_Y g(y)(\Phi_\#\mu)(dy) = \int_X g(\Phi(x))\mu(dx).
\end{equation}
\end{theorem}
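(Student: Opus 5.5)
We follow the standard measure-theoretic induction: establish the identity \eqref{eq:pushforward} first for indicator functions, extend it by linearity to simple functions, pass to nonnegative measurable functions by monotone convergence, and finally treat general $g$ by splitting into positive and negative parts. The integrability equivalence is read off from the nonnegative case applied to $|g|$.

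\textbf{Indicators and simple functions.} Let $g=\1_B$ with $B\in\B$. Then $g\circ\Phi=\1_{\Phi^{-1}(B)}$, and $\Phi^{-1}(B)\in\A$ because $\Phi$ is $(\A,\B)$-measurable. By the definition $\Phi_\#\mu=\mu\circ\Phi^{-1}$ we have
\[
\int_Y \1_B\,d(\Phi_\#\mu)=\mu(\Phi^{-1}(B))=\int_X \1_B(\Phi(x))\,\mu(dx),
\]
with both sides possibly $+\infty$. Both integrals are linear in $g$ for nonnegative coefficients, so the identity extends to nonnegative simple functions $g=\sum_k c_k\1_{B_k}$ with $c_k\ge 0$.

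\textbf{Nonnegative measurable functions.} For $\B$-measurable $g\ge 0$, choose nonnegative simple functions $g_n\uparrow g$ pointwise. Then $g_n\circ\Phi\uparrow g\circ\Phi$ pointwise on $X$, and each $g_n\circ\Phi$ is a nonnegative simple $\A$-measurable function. Applying the monotone convergence theorem on both sides of the simple-function identity gives
\[
\int_Y g\,d(\Phi_\#\mu)=\int_X g\circ\Phi\,d\mu\in[0,\infty].
\]
Only non-negativity and measurability are used here, so nothing beyond monotone convergence is needed.

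\textbf{General $g$ and integrability.} Apply the previous step to $|g|$, and note that $|g|\circ\Phi=|g\circ\Phi|$. This shows $\int_Y|g|\,d(\Phi_\#\mu)<\infty$ if and only if $\int_X|g\circ\Phi|\,d\mu<\infty$, which is the stated equivalence. In that case we write $g=g^+-g^-$ and observe $(g\circ\Phi)^\pm=g^\pm\circ\Phi$. The nonnegative case applied to $g^+$ and to $g^-$ gives two finite equal pairs of integrals, and subtracting them yields \eqref{eq:pushforward}.

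None of the steps is a genuine obstacle. The only care needed is to keep the identity in $[0,\infty]$ until finiteness is established, so that no expression of the form $\infty-\infty$ arises when passing from $g^\pm$ to $g$.
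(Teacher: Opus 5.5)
Your proof is correct. It is the standard argument: indicators, then nonnegative simple functions, then monotone convergence, then the split $g=g^+-g^-$. Keeping the identity in $[0,\infty]$ before subtracting handles a possibly infinite $\mu$, and applying the nonnegative case to $|g|$ gives the integrability equivalence. The paper gives no proof of its own and only cites Bogachev's Theorem~3.6.1, which rests on this same simple-function approximation and limiting argument.
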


\subsection{Disintegration of an integral via a Markov kernel}\label{subsec:disintegration}

Combining the preceding results yields the following identity, which is the key computational device in the proof of Theorem~\ref{thrm:pushforwardkfe}. Suppose $\xi\colon\Omega\to S$ is a random variable in a Souslin space, $\eta:=\Phi\circ\xi\colon\Omega\to\cX$ for a Borel map $\Phi$, and $\kappa$ is the regular conditional distribution of $\xi$ given $\eta$ (Theorem~\ref{thm:rcd}). Then for any $\PR$-integrable function of the form $\varphi(\xi)$, the pushforward formula (Theorem~\ref{thm:pushforward}) and the kernel Fubini theorem (Theorem~\ref{thm:kernel-fubini}) give
\begin{equation}\label{eq:disintegrate}
\int_\Omega \varphi(\xi)d\PR
=\int_\cX\left(\int_S \varphi(y)\kappa(dy\mid x)\right)(\Phi_\#p_\xi)(dx),
\end{equation}
where $p_\xi:=\PR\circ\xi^{-1}$ denotes the law of $\xi$.

We now specialise the above to the setting of Theorem~\ref{thrm:pushforwardkfe}.

\begin{corollary}[Conditional law of $Y_t$ given $\Phi(Y_t)$]\label{cor:conditional-law}
Let $(\cY,d_{\cY})$ be a Polish space, let $\Phi\colon\cY\to\cX$ be a $(\B(\cY),\B(\cX))$-measurable map, and let $(Y_t)_{0\le t\le 1}$ be a stochastic process on $\cY$ with time-marginals $(p_t^{\cY})_{0\le t\le 1}$. For each $t\in[0,1]$, define $p_t^{\cX}:=\Phi_\# p_t^{\cY}$. Then:
\begin{enumerate}[label=\textup{(\roman*)}]
\item Since $\cY$ is Polish (hence Souslin), Theorem~\ref{thm:rcd} provides a Markov kernel $\kappa_t\colon\B(\cY)\times\cX\to[0,1]$ such that
\[
\PR(Y_t \in B \mid \Phi(Y_t)) = \kappa_t(B \mid \Phi(Y_t)), \qquad \PR\text{-a.e.}, \quad \forall B \in \B(\cY).
\]
We denote this kernel by
\begin{equation}\label{eq:conditional-law-notation}
p_t^{\cY}(dy_t \mid x_t) := \kappa_t(dy_t \mid x_t),
\end{equation}
so that $p_t^{\cY}(\cdot \mid \Phi(Y_t) = x_t)$ is the conditional law of $Y_t$ given $\Phi(Y_t) = x_t$. This can also be written \[p_t^{\cY}(dy_t \mid \Phi(Y_t) = x_t) := p_t^{\cY}(dy \mid x_t). \]
\item For any $p_t^{\cY}$-integrable function $\varphi\colon\cY\to\R$, the disintegration identity in~\eqref{eq:disintegrate} gives
\begin{equation}\label{eq:disintegrate-specialised}
\int_{\cY} \varphi(y_t) p_t^{\cY}(dy_t) = \int_{\cX}\left(\int_{\cY} \varphi(y_t) p_t^{\cY}(dy_t \mid x_t)\right) p_t^{\cX}(dx_t).
\end{equation}
\item By Proposition~\ref{prop:kernel-sigma-agree}, the kernel integral and the conditional expectation agree:
\begin{equation}\label{eq:kernel-condexp-agree}
\int_{\cY} \varphi(y_t) p_t^{\cY}(dy_t \mid x_t) = \E[\varphi(Y_t) \mid \Phi(Y_t) = x_t], \qquad p_t^{\cX}\text{-a.e.}\ x_t.
\end{equation}
\end{enumerate}
\end{corollary}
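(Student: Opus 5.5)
The corollary assembles results already recorded in this appendix, so I would prove its three parts in turn, each by invoking an earlier statement.

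\textbf{Part (i).} Take the probability space to be the one carrying the process, with $\xi := Y_t$ and $\eta := \Phi(Y_t)$. Here $\xi$ is measurable into $(\cY,\B(\cY))$. Since $\Phi$ is $(\B(\cY),\B(\cX))$-measurable, $\eta$ is measurable into $(\cX,\B(\cX))$ as a composition of measurable maps. A Polish space is Souslin, being the image of itself under the identity. Theorem~\ref{thm:rcd} therefore applies with $S=\cY$ and $(Y,\A_Y)=(\cX,\B(\cX))$. It supplies a kernel $\kappa_t$ with $\PR(Y_t\in B\mid\Phi(Y_t))=\kappa_t(B\mid\Phi(Y_t))$ a.e., which is exactly the defining property in (i); the notation $p_t^{\cY}(dy_t\mid x_t)$ is then a definition.

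\textbf{Part (ii).} This is \eqref{eq:disintegrate} specialised to $\xi=Y_t$ and $p_\xi=p_t^{\cY}$. I would still verify it directly rather than just cite it.
\begin{itemize}
\item By the change of variables Theorem~\ref{thm:pushforward}, $\int_\Omega\varphi(Y_t)\,d\PR=\int_{\cY}\varphi\,dp_t^{\cY}$.
\item Let $\mu$ be the joint law of $(\Phi(Y_t),Y_t)$ on $\cX\times\cY$. By the defining property of $\kappa_t$ in (i), for rectangles $B_1\times B_2$,
\[
\mu(B_1\times B_2)=\E\big[\mathbf 1_{B_1}(\Phi(Y_t))\,\kappa_t(B_2\mid\Phi(Y_t))\big]=\int_{B_1}\kappa_t(B_2\mid x)\,p_t^{\cX}(dx).
\]
The middle step uses \eqref{eq:condexp-sets} with the $\sigma(\Phi(Y_t))$-set $\{\Phi(Y_t)\in B_1\}$; the last step is Theorem~\ref{thm:pushforward} again.
\item By the uniqueness part of Theorem~\ref{thm:kernel-fubini}, $\mu$ is the measure built from $\nu=p_t^{\cX}$ and $\kappa_t$.
\item Apply \eqref{eq:kernel-fubini} to $(x,y)\mapsto\varphi(y)$. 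This function is $\mu$-integrable because its integral against $\mu$ equals $\int|\varphi|\,dp_t^{\cY}<\infty$.
\end{itemize}
These steps give \eqref{eq:disintegrate-specialised}.

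\textbf{Part (iii).} Apply Proposition~\ref{prop:kernel-sigma-agree} with $f=\varphi\circ Y_t$. This $f$ is integrable by the integrability of $\varphi$ and the pushforward theorem. The exceptional set is $(\PR\circ\eta^{-1})$-null, and $\PR\circ\eta^{-1}=\Phi_\#p_t^{\cY}=p_t^{\cX}$, which gives the stated a.e.\ qualifier.

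\textbf{Main obstacle.} The only delicate point is the identification of the joint measure in (ii). The kernel Fubini theorem is stated for an abstractly constructed $\mu$, so one must check that it coincides with the actual joint law of $(\Phi(Y_t),Y_t)$. I expect the rectangle computation above, combined with the $\pi$--$\lambda$ uniqueness already built into Theorem~\ref{thm:kernel-fubini}, to settle this. A secondary point to keep explicit is measurability of $\eta$ into $\B(\cX)$; no topology on $\cX$ is needed, since Theorem~\ref{thm:rcd} only requires the conditioning space to be measurable.
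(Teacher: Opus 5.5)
Your proposal is correct and follows the paper's route: (i) is a direct application of Theorem~\ref{thm:rcd} with $\xi=Y_t$ and $\eta=\Phi(Y_t)$; (ii) is the disintegration identity built from the pushforward formula and kernel Fubini; and (iii) is Proposition~\ref{prop:kernel-sigma-agree}, using $\PR\circ\eta^{-1}=p_t^{\cX}$. Your rectangle computation, which uses the uniqueness clause of kernel Fubini to identify its measure with the actual joint law of $(\Phi(Y_t),Y_t)$, fills in a step the paper leaves implicit when it says the two theorems ``give'' the disintegration identity, so it is a welcome addition rather than a departure.
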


\subsection{Hypotheses}\label{subsec:hypotheses}

We collect here the standing assumptions for Theorem~\ref{thrm:pushforwardkfe}. We first fix notation for the relevant function spaces.

\begin{definition}[Function spaces]\label{def:functionspaces}
Let $(S,d)$ be a metric space.
\begin{enumerate}[label=\textup{(\roman*)},leftmargin=2em]
\item $C_0(S)$ denotes the space of continuous functions $f\colon S\to\R$ that \emph{vanish at infinity}: for every $\varepsilon>0$ there exists a compact set $K\subset S$ such that $|f(s)|<\varepsilon$ for all $s\notin K$. When $S$ is compact, $C_0(S)=C(S)$.
\item For $k\in\{1,2,\ldots,\infty\}$ and $S$ a smooth manifold (possibly with boundary), $C_0^k(S)$ denotes the space of $C^k$ functions that, together with all derivatives up to order~$k$, vanish at infinity. When $S$ is compact, $C_0^k(S)=C^k(S)$.
\end{enumerate}
\end{definition}

\begin{assumption}[Generator Matching on $\cY$]\label{ass:gm-cY}
$(\cY,d_{\cY})$ is a Polish space. $(Y_t)_{0\le t\le 1}$ is a time-inhomogeneous Feller process on $\cY$ with infinitesimal generator $W_t$, time-marginals $(p_t^{\cY})_{0\le t\le 1}$, and test-function space $\cD(W_t)\subset C_0(\cY)$, satisfying the Generator Matching regularity conditions of~\cite[Appendix~A.2, Assumptions~1--5]{holderrieth2025generatormatchinggenerativemodeling}.
\end{assumption}

\begin{assumption}[Pushforward map]\label{ass:Phi}
$(\cX,\B(\cX))$ is a measurable space equipped with a measure-determining linear space of bounded test functions $\cT(\cX)$. The map $\Phi\colon\cY\to\cX$ is $(\B(\cY),\B(\cX))$-measurable, and for every $f\in\cT(\cX)$ and every $t\in[0,1)$:
\begin{enumerate}[label=\textup{(\roman*)},leftmargin=2em]
 \item $f\circ\Phi \in \cD(W_t)$;
 \item the map $t\mapsto W_t(f\circ\Phi)$ is continuous in $\lVert\cdot\rVert_\infty$ on $[0,1)$.
\end{enumerate}
\end{assumption}
\begin{remark}
    Assumption \ref{ass:Phi} places conditions on the projection $\Phi$ from \(\mathcal Y\) to \(\mathcal X\), and it does not restrict the choice of neural network architecture or loss.
\end{remark}
\begin{assumption}[Integrability]\label{ass:integrability}
For every $f\in\cT(\cX)$ and every $t\in[0,1)$, the generator's action on the lifted test function is integrable:
\begin{equation}\label{eq:integrability}
\E_{y_t \sim p_t^{\cY}}[\lvert W_t(f\circ\Phi)(y_t)\rvert] < \infty.
\end{equation}
If the conditional generator admits a linear parametrisation (Definition~\ref{def:condlinparam}) with target $F_t^{y_t}(\Phi(y_t))$, we additionally require
\begin{equation}\label{eq:integrability-target}
\E_{y_t \sim p_t^{\cY}}[\lVert F_t^{y_t}(\Phi(y_t))\rVert_{V_{t,\Phi(y_t)}}] < \infty, \qquad t\in[0,1).
\end{equation}
\end{assumption}

\begin{assumption}[KFE sufficiency on $\cX$]\label{ass:gm-cX}
The KFE with the marginal generator $L_tf$ of Theorem~\ref{thrm:pushforwardkfe} uniquely determines the probability path on~$\cX$: if $(q_t)_{0\le t\le 1}$ is any probability path with $q_0=p_0^{\cX}:=\Phi_\# p_0^{\cY}$ and $\partial_t\langle q_t,f\rangle=\langle q_t,L_t f\rangle$ for all $f\in\cT(\cX)$ and $t\in[0,1)$, then $q_t=\Phi_\# p_t^{\cY}$ for all $t\in[0,1]$.
\end{assumption}

We remark that Assumption~\ref{ass:gm-cX} is non-trivial and we give an example of it being violated in \S\ref{subs:regularity-violating}, despite $\Phi \in C^\infty$ and $Y_t$ simply being a Brownian motion. Informally, KFE insufficiency may occur when the first-order dynamics of the process (such as drift, diffusion, and instantaneous rate of jumps in the Euclidean case) do not uniquely determine the marginal dynamics --- that is, when there exists a stochastic process whose instantaneous dynamics are indistinguishable from the target process in $\mathcal{X}$ but which does not have the desired marginal distribution. This is exactly the failure mode of Example~\ref{subs:regularity-violating}. We note, however, that this hypothesis is agnostic to the dimensionality of $\mathcal{X}$ and $\mathcal{Y}$. For further discussion of uniqueness of the KFE, we direct the reader to Generator Matching~\citep{holderrieth2025generatormatchinggenerativemodeling}, Appendix~A.2.

\paragraph{Discussion of the hypotheses.}\label{rmk:hypotheses}
Assumption~\ref{ass:gm-cY} is simply the statement that $(Y_t, W_t)$ is a Generator Matching process on~$\cY$. The proof of Theorem~\ref{thrm:pushforwardkfe} uses Assumptions~\ref{ass:gm-cY}--\ref{ass:integrability}. Assumption~\ref{ass:gm-cX} is not needed for the KFE itself but is required in order to \emph{use} the marginal generator $L_t$ within the Generator Matching framework.

\paragraph{Domain compatibility.}
Since $\cD(W_t)\subset C_0(\cY)$, condition~(i) requires both that $f\circ\Phi$ vanish at infinity and that $f\circ\Phi$ belong to the domain of $W_t$, which may impose further regularity constraints on $\Phi$, e.g., differentiability. 

\paragraph{Compactness.} When $\cY$ is compact, $C_0(\cY) = C(\cY)$ and $C_0^k(\cY) = C^k(\cY)$, so the vanishing-at-infinity requirement on $f\circ\Phi$ is vacuous The remaining question is whether $f\circ\Phi$ has sufficient regularity to belong to $\cD(W_t)$. For many standard generators, $\cD(W_t) =  C_0^k(\cY) = C^k(\cY)$ for some $k$ (e.g.\ $k=2$ for second-order differential operators, $k=0$ for rate matrices), so it suffices that $\Phi$ and $f$ are $C^k$. Moreover, integrability (Assumption~\ref{ass:integrability}) is automatic on a compact space, since continuous functions are bounded and integration against a probability measure is finite. The most important special case in which compactness simplifies the verification is the product-space setting $\cY = \cX \times \cZ$, $\Phi = \pi_\cX$ (see \S\ref{subsec:latentprocess}). When $\cZ$ is compact, the domain compatibility and integrability conditions along the $\cZ$-fibre are automatic for continuous integrands. This covers, for example, the case of a continuous main process $X_t$ jointly evolving with a finite-state latent CTMC $Z_t$ (as in the example of \S\ref{subs:1d-example}), since any finite set is compact. It also covers flow matching and diffusion on compact Riemannian manifolds, such as $\operatorname{SO}(3)$.

\paragraph{Non-compact state spaces.}
When $\cY$ is non-compact, condition~(i) requires $f\circ\Phi\in\cD(W_t)\subset C_0(\cY)$, so in particular $f\circ\Phi$ must vanish at infinity. This is a genuine restriction on~$\Phi$. It holds whenever $\Phi$ is proper (preimages of compact sets are compact). However, it may fail for general maps such as projections $\pi_\cX : \cX\times \cZ\to \cX$ when $\cZ$ is non-compact.

\subsection{Proof of Theorem~\ref{thrm:pushforwardkfe}}\label{subsec:proof}
\begin{proof}
By the change-of-variables formula for the pushforward (Theorem~\ref{thm:pushforward}),
\begin{equation}\label{eq:pf-step1}
\langle p_t^{\cX}, f\rangle
= \int_\cX f(x_t)(\Phi_\# p_t^{\cY})(dx_t)
= \int_{\cY} f(\Phi(y_t)) p_t^{\cY}(dy_t)
= \langle p_t^{\cY}, f\circ\Phi\rangle.
\end{equation}
By Assumption~\ref{ass:Phi}(i), $f\circ\Phi\in\cD(W_t)$. Since $Y_t$ is generated by $W_t$ and has time-marginals $p_t^{\cY}$, the KFE holds
\begin{equation}\label{eq:pf-step2}
\partial_t\langle p_t^{\cY}, f\circ\Phi\rangle
= \langle p_t^{\cY}, W_t(f\circ\Phi)\rangle
= \int_{\cY} W_t(f\circ\Phi)(y_t)p_t^{\cY}(dy_t).
\end{equation}
Applying Corollary~\ref{cor:conditional-law}(ii) with $\varphi := W_t(f\circ\Phi)$, noting that $\varphi$ is $p_t^{\mathcal Y}$-integrable by Assumption~\ref{ass:integrability}, we have
\begin{equation}\label{eq:pf-step3}
\int_{\cY} W_t(f\circ\Phi)(y_t)p_t^{\cY}(dy_t)
= \int_{\cX}\left(\int_{\cY} W_t(f\circ\Phi)(y_t)p_t^{\cY}(dy_t\mid x_t)\right) p_t^{\cX}(dx_t).
\end{equation}
By Corollary~\ref{cor:conditional-law}(iii), the inner integral in~\eqref{eq:pf-step3} is precisely the conditional expectation evaluated at the point $x_t$:
\begin{equation}\label{eq:pf-step4}
\int_{\cY} W_t(f\circ\Phi)(y_t)p_t^{\mathcal Y}(dy_t\mid x_t)
= \E[W_t(f\circ\Phi)(Y_t)\mid\Phi(Y_t)=x_t]
= L_t f(x_t).
\end{equation}
Combining equations~\ref{eq:pf-step1}--\ref{eq:pf-step4}:
\[
\partial_t\langle p_t^{\cX}, f\rangle
= \int_{\cX} L_t f(x_t)p_t^{\cX}(dx_t)
= \langle p_t^{\cX}, L_t f\rangle. \qedhere
\]
\end{proof}
\subsubsection{Alternative proof via the tower property.}
We also remark that the tower property (Proposition~\ref{prop:tower}) gives a shorter argument that avoids explicit use of kernels. Start from the weak KFE for $Y_t$ evaluated against $f\circ\Phi\in\mathcal{T}^W$:
\[
\partial_t \mathbb E[(f\circ\Phi)(Y_t)]
= \mathbb E[W_t(f\circ\Phi)(Y_t)]
\]
The left-hand side equals $\partial_t\int f \, dp_t^\mathcal X$, since $p_t^{\mathcal{X}}=\Phi_\# p_t^{\mathcal{Y}}$. For the right-hand side, apply the tower property 
\begin{align*}
\mathbb{E}[W_t(f\circ\Phi)(Y_t)] &= \mathbb{E}[\mathbb{E}[W_t(f\circ\Phi)(Y_t)\mid\Phi(Y_t)]] \\
&= \mathbb{E}[\mathcal{L}_t f(\Phi(Y_t))] \\
&= \int \mathcal{L}_t f\,dp_t^{\mathcal{X}}.
\end{align*}
Combining both sides yields $\partial_t\langle p_t^\mathcal X, f\rangle  = \langle p_t^\mathcal X, \mathcal L_t f\rangle$.
\section{Gradients are preserved}
\subsection{A property of Bregman Divergences}
Here we restate a well-known property of Bregman divergences, seen in e.g.  \citep{lipman2024flowmatchingguidecode, billera2025timedependentlossreweighting}.
\begin{lemma}\label{lem:bregmangradlemma}
    (Expectations commute with Bregman divergences under gradients) Let $D_\phi(a,b) = \phi(a) - \phi(b) - \inner{a-b}{\nabla\phi(b)}$ be a Bregman divergence, where $\phi \colon \Omega_\phi \subset V \to \R$ is strictly convex and differentiable on the closed and convex domain $\Omega_\phi \subset V$. Let $X$ be an integrable $\Omega_\phi$-valued random variable such that $\E[|\phi(X)|] < \infty$, and let $f_\theta(x) \in \Omega_\phi$. Then
    \begin{equation*}
        \nabla_\theta \E[D_\phi(X,f_\theta(x))] = \nabla_\theta D_\phi(\E[X],f_\theta(x)).
    \end{equation*}
\end{lemma}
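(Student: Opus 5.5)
The plan is to expand the Bregman divergence and observe that every term involving the random variable $X$ either does not depend on $\theta$ or is linear in $X$. Linear terms commute with the expectation. Write $b_\theta := f_\theta(x)$. By definition,
\begin{equation*}
    D_\phi(X,b_\theta) = \phi(X) - \phi(b_\theta) - \inner{X}{\nabla\phi(b_\theta)} + \inner{b_\theta}{\nabla\phi(b_\theta)}.
\end{equation*}

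The first step is to take expectations term by term. This is legitimate because $\E[|\phi(X)|]<\infty$. In addition, $|\inner{X}{\nabla\phi(b_\theta)}| \le \lVert X\rVert\,\lVert\nabla\phi(b_\theta)\rVert$ is integrable since $X$ is integrable and $b_\theta$ is deterministic. The remaining two terms are constants. Using linearity of the inner product (the Bochner/Pettis identity $\E\inner{X}{v} = \inner{\E X}{v}$ for fixed $v$), this gives
\begin{equation*}
    \E[D_\phi(X,b_\theta)] = \E[\phi(X)] - \phi(b_\theta) - \inner{\E[X]}{\nabla\phi(b_\theta)} + \inner{b_\theta}{\nabla\phi(b_\theta)}.
\end{equation*}

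The second step is to note that $\E[X]\in\Omega_\phi$, since $\Omega_\phi$ is closed and convex. Hence $D_\phi(\E[X],b_\theta)$ is defined, and expanding it the same way yields
\begin{equation*}
    \E[D_\phi(X,b_\theta)] = D_\phi(\E[X],b_\theta) + \E[\phi(X)] - \phi(\E[X]).
\end{equation*}
The difference $\E[\phi(X)] - \phi(\E[X])$ is a Jensen gap that is finite and independent of $\theta$. Differentiating in $\theta$ therefore kills it, and the claim follows. The lemma implicitly assumes differentiability of $\theta\mapsto f_\theta(x)$ and of $\nabla\phi$, so both sides are differentiable whenever either is.

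The only delicate point is justifying $\E[X]\in\Omega_\phi$ and the interchange $\E\inner{X}{v}=\inner{\E X}{v}$ in a possibly infinite-dimensional inner product space $V$. I would handle this by interpreting $\E[X]$ as a Bochner (or Pettis) integral. Closed convex sets contain the barycentres of integrable measures supported in them, which follows from Hahn--Banach separation: if $\E X\notin\Omega_\phi$, some continuous linear functional separates it from $\Omega_\phi$, contradicting linearity of expectation. Note that no interchange of $\nabla_\theta$ and $\E$ is needed at all, because the $\theta$-independent term is separated off before differentiating.
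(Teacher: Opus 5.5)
Your proposal is correct and follows essentially the same route as the paper. Both arguments expand the divergence and take expectations term by term, using Cauchy--Schwarz for the inner-product term. Both then use closedness and convexity of $\Omega_\phi$ to place $\E[X]$ in the domain, and separate off the $\theta$-independent Jensen gap $\E[\phi(X)]-\phi(\E[X])$ before differentiating. Your Hahn--Banach justification of $\E[X]\in\Omega_\phi$ is a useful detail that the paper only asserts.
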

\begin{proof}
    Write $b := f_\theta(x)$ for brevity. Since $\Omega_\phi$ is closed and convex and $X$ is an integrable $\Omega_\phi$-valued random variable, one has $\E[X] \in \Omega_\phi$, so $D_\phi(\E[X],b)$ is well-defined. We also remark that the inner product term $\inner{X-b}{\nabla\phi(b)}$ has finite expectation since $\nabla\phi(b)$ is deterministic and the Cauchy--Schwarz inequality gives $\E[|\inner{X}{\nabla\phi(b)}|] \leq \lVert\nabla\phi(b)\rVert \E[\lVert X\rVert] < \infty$.

    Now, note that, expanding the Bregman divergence and taking expectations, we obtain
    \begin{align*}
        \E[D_\phi(X,b)]
        &= \E[\phi(X)] - \phi(b) - \E[\inner{X - b}{\nabla\phi(b)}]\\
        &= \E[\phi(X)] - \phi(b) - \inner{\E[X] - b}{\nabla\phi(b)}.
    \end{align*}

    On the other hand, expanding $D_\phi(\E[X],b)$ gives
    \begin{equation*}
        D_\phi(\E[X],b) = \phi(\E[X]) - \phi(b) - \inner{\E[X] - b}{\nabla\phi(b)}.
    \end{equation*}
    Comparing the two expressions yields
    \begin{equation}\label{eq:bregman-remainder}
        \E[D_\phi(X,b)] = D_\phi(\E[X],b) + \E[\phi(X)] - \phi(\E[X]).
    \end{equation}
    The remainder $\E[\phi(X)] - \phi(\E[X])$ is independent of $\theta$. Applying $\nabla_\theta$ to both sides eliminates this constant and one has
    \begin{equation*}
        \nabla_\theta \E[D_\phi(X,f_\theta(x))] = \nabla_\theta D_\phi(\E[X],f_\theta(x)). \qedhere
    \end{equation*}
\end{proof}

\subsection{Proof of Theorem~\ref{thrm:losseshavesamegrads}}\label{sec:gradientproof}
\begin{proof}
We show that $\nabla_\theta L_{\mathrm{cgm}}(\theta) = \nabla_\theta L_{\mathrm{gm}}(\theta)$.  Writing the conditional loss in its disintegrated form (Definition~\ref{def:condgmloss}), we compute
\begin{align*}
\nabla_\theta
& L_{\mathrm{cgm}}(\theta) \\
&= \nabla_\theta
\E_{t \sim U[0,1]}\E_{x_t \sim p_t^{\mathcal{X}}}\E_{y_t \sim p_t^{\mathcal{Y}}(\cdot\mid\Phi(Y_t)=x_t)}
  [D_{t,x_t}(F_t^{y_t}(x_t), F_t^\theta(x_t))] \\
&= 
\E_{t \sim U[0,1]}\E_{x_t \sim p_t^{\mathcal{X}}}
\nabla_\theta \E_{y_t \sim p_t^{\mathcal{Y}}(\cdot\mid\Phi(Y_t)=x_t)}
  [D_{t,x_t}(F_t^{y_t}(x_t), F_t^\theta(x_t))]
&& \text{} \\
&= 
\E_{t \sim U[0,1]}\E_{x_t \sim p_t^{\mathcal{X}}}
\nabla_\theta D_{t,x_t}(\E_{y_t \sim p_t^{\mathcal{Y}}(\cdot\mid\Phi(Y_t)=x_t)}[F_t^{y_t}(x_t)], F_t^\theta(x_t))
&& \text{(Lemma~\ref{lem:bregmangradlemma})} \\
&= 
\E_{t \sim U[0,1]}\E_{x_t \sim p_t^{\mathcal{X}}}
\nabla_\theta D_{t,x_t}(F_t(x_t), F_t^\theta(x_t))
&& \text{(Eq.~\ref{eqn:margfromcond})} \\
&= 
\nabla_\theta \E_{t \sim U[0,1]}\E_{x_t \sim p_t^{\mathcal{X}}}
  [D_{t,x_t}(F_t(x_t), F_t^\theta(x_t))]
&& \text{} \\
&= \nabla_\theta L_{\mathrm{gm}}(\theta).
\end{align*}
In the third line, we apply Lemma~\ref{lem:bregmangradlemma} to the inner expectation over $y_t$, noting that the the random variable corresponds to $F_t^{y_t}(x_t) \in \Omega_{t,x_t}$ and the second argument $F_t^\theta(x_t)$ depends on $\theta$ but not on $y_t$, so the Bregman gradient-expectation interchange applies. In the fourth line, we use the identity $F_t(x_t) = \E[F_t^{Y_t}(x_t) \mid \Phi(Y_t) = x_t]$ from~\eqref{eqn:margfromcond}.
\end{proof}
\section{One-dimensional example}\label{app:mixedconddiscrete}

This appendix provides a derivation for the one-dimensional example of \S\ref{subs:1d-example}. The example is an instance of the double conditioning of \S\ref{subsec:doubleconditioning}, in that the static latent variable is the pair of endpoints \((x_0,x_1)\), and the latent stochastic process is the CTMC state \(z_t\). We also note that in our empirical example, the rate $\lambda_z^{x_1}(x,t)$ does not depend on the endpoint $x_1$, but we present the general case. 

\subsection{The conditional generator}\label{subsec:mixedcondgen}

Fix endpoints \(x_0, x_1 \in \R\). We define the joint process \((X_t,Z_t)\) on \(\R \times \{-1,+1\}\), conditioned on \(x_0,x_1\), by specifying its infinitesimal generator and initial distribution $(X_0,Z_0) \sim (\delta_{x_0}, \mathrm{Unif}(\{-1,1\}))$.

For test functions \(g \in C^2(\R) \otimes C(\{-1,+1\})\), let
\begin{equation}\label{eqn:mixedgen}
    W_t^{x_1} g(x,z) = b_t^{z,x_1}(x) \partial_x g(x,z) + \frac{1}{2}\sigma_t^2 \partial_{xx} g(x,z) + \lambda_z^{x_1}(x,t)[g(x,-z) - g(x,z)],
\end{equation}
where \(b_t^{z,x_1}(x) := \frac{z x_1 - x}{1-t}\) and \(\lambda_z^{x_1}(x,t) \geq 0\) is the rate of jumping from \(z\) to \(-z\). Conditional on \(Z_t = z\) and $x_1$, the continuous component $X_t$ evolves as a Brownian bridge
\begin{equation*}
    dX_t = \frac{Z_t x_1 - X_t}{1-t} dt + \sigma_t dB_t,
\end{equation*}
directed towards \(x_1\) when \(Z_t = +1\) and towards \(-x_1\) when \(Z_t = -1\), while the discrete component jumps between \(\pm 1\) at state-dependent rate \(\lambda_z^{x_1}(X_t,t)\). The rates \(\lambda_z^{x_1}\) are chosen so that \(X_1 = x_1\) a.s.\ in the conditional paths. The resulting conditional trajectories are illustrated in subfigure~B of Figure~\ref{fig:switchingdriftexample}.

Applying \eqref{eqn:mixedgen} to \(f \circ \pi_\cX\), the CTMC term vanishes, giving the conditional generator
\begin{equation}\label{eqn:switchcondgen}
    L_t^{x_1,z}f(x_t) = \frac{z_t x_1 - x_t}{1-t}f'(x_t) + \frac{1}{2}\sigma_t^2 f''(x_t).
\end{equation}
Rather than parametrising the velocity \(b_t^{z,x_1}(x_t) = \frac{z_t x_1 - x_t}{1-t}\) directly (which would require training against a quantity that diverges as \(t \to 1\)) we adopt an \(x_1\)\textit{-prediction} parametrisation. This admits the linear parametrisation \[L_t^{x_1,z}f(x_t) = \langle \mathcal{K}_{t,x_t}f, F_t^{x_1,z}(x_t)\rangle_{V_{t,x_t}}\] with \(\Omega_{t,x_t} = V_{t,x_t} = \R^2\), equipped with the Euclidean inner product, and
\begin{equation}\label{eqn:x1predlinparam}
    \mathcal{K}_{t,x_t}f = \begin{pmatrix} \frac{f'(x_t)}{1-t} \\ -\frac{x_t f'(x_t)}{1-t} + \frac{1}{2}\sigma_t^2 f''(x_t) \end{pmatrix}, \qquad F_t^{x_1,z}(x_t) = \begin{pmatrix} z_t x_1 \\ 1 \end{pmatrix}.
\end{equation}
Note that \(\mathcal{K}_{t,x_t}\) depends on the joint state \((x_t,z_t)\) only through \(x_t\), so this is a valid conditional linear parametrisation in the sense of Definition~\ref{def:condlinparam}.

\subsection{Training}\label{subsec:mixedtraining}

By \eqref{eqn:margfromcond}, the marginal target is
\begin{equation*}
    F_t(x_t) = \E[F_t^{x_1,Z}(x_t) \mid X_t = x_t] = \begin{pmatrix} \E[Z_t \mid X_t = x_t]\cdot x_1 \\ 1 \end{pmatrix}.
\end{equation*}
The conditional dependence on \(z_t\) has been integrated out, and the first component depends on \(x_t\) through the conditional mean \(\E[Z_t \mid X_t = x_t]\), while the second component is constant.  By Theorem~\ref{thrm:losseshavesamegrads}, a neural network \(F_t^\theta(x_t) = (x_1^\theta(x_t), f_\theta^{(2)}(x_t))^T\) that receives only \(x_t\) as input may be trained against the conditional loss
\begin{equation*}
    L_{\mathrm{cgm}}(\theta) = \E_{t \sim U[0,1], (x_0,x_1) \sim q, (x_t, z_t) \sim p_t(dx,dz \mid x_0,x_1)}\left[\left\| F_t^{x_1,z}(x_t) - F_t^\theta(x_t)\right\|^2\right]
\end{equation*}
and still recover the correct marginal target \(F_t(x_t)\). Since \(F_t^{x_1,z}(x_t) = (z_t x_1, 1)^T\), the second component is minimised trivially by \(f_\theta^{(2)} \equiv 1\) and may be fixed a priori, so it suffices to train only \(x_1^\theta\) against
\begin{equation*}
    \E_{t \sim U[0,1], (x_0,x_1) \sim q, (x_t, z_t) \sim p_t(\cdot \mid x_0,x_1)}\left[\left\| z_t x_1 - x_1^\theta(x_t)\right\|^2\right].
\end{equation*}

\section{Edit Flows as a special case}\label{app:editflows}

In this appendix we give the detailed derivation showing that Theorem~3.1 of \citet{havasi2025editflowsflowmatching} is recovered as a special case of the pushforward conditional generator matching framework of \S\ref{subsec:pushforwardcgm}--\S\ref{subsec:latentprocess}. We assume that the state space is finite, which has no practical consequence but simplifies the test function space to $C(\mathcal X \times \mathcal Z) \cong \mathbb R^{|\mathcal X \times \mathcal Z|}$.

\subsection{Conditional generator and linear parametrisation}\label{subsec:ef-condgen}

Let \(\cX\) and \(\cZ\) be finite sets and let \((X_t,Z_t)\) be a CTMC on the joint state space \(\cX \times \cZ\) with rates \(u_t(x',z' \mid x_t, z_t) \geq 0\). Then the infinitesimal generator on the joint space acts on test functions \(g \in C(\cX \times \cZ)\) as
\begin{equation}\label{eqn:jointctmcgen}
    W_t g(x_t,z_t) = \sum_{(x',z')} u_t(x',z' \mid x_t,z_t)[g(x',z') - g(x_t,z_t)].
\end{equation}
We verify that the conditional generator \(L_t^{z_t}\) of \S\ref{subsec:latentprocess} admits a linear parametrisation in the sense of Definition~\ref{def:condlinparam} in which \(\mathcal{K}_{t,x_t}\) and \(V_{t,x_t}\) depend only on \(x_t\). Applying \eqref{eqn:jointctmcgen} to a test function of the form \(f \circ \pi_{\cX}\), we obtain
\begin{equation}\label{eqn:efcondgen}
    L_t^{z_t}f(x_t) = W_t(f \circ \pi_\cX)(x_t,z_t) = \sum_{x'}[f(x') - f(x_t)]\underbrace{\sum_{z'}u_t(x',z' \mid x_t,z_t)}_{=: \tilde{u}_t(x' \mid x_t, z_t)}.
\end{equation}
The quantity \(\tilde{u}_t(x' \mid x_t,z_t) := \sum_{z'} u_t(x',z' \mid x_t,z_t)\) is the total rate of the \(\cX\)-component jumping to \(x'\), given the current joint state \((x_t,z_t)\), irrespective of where \(z\) transitions. Taking \(V_{t,x_t} := \R^{|\cX|}\) with the standard inner product and defining
\begin{equation*}
    \mathcal{K}_{t,x_t} f := (f(x') - f(x_t))_{x' \in \cX} \in \R^{|\cX|}, \qquad F_t^{z_t}(x_t) := (\tilde{u}_t(x' \mid x_t, z_t))_{x' \in \cX} \in \R_{\geq 0}^{|\cX|},
\end{equation*}
the conditional generator has the linear parametrisation
\begin{equation*}
    L_t^{z_t}f(x_t) = \inner{\mathcal{K}_{t,x_t}f}{F_t^{z_t}(x_t)}_{V_{t,x_t}}.
\end{equation*}
Both \(\mathcal{K}_{t,x_t}\) and \(V_{t,x_t}\) depend on the current state \((x_t,z_t)\) only through \(x_t\), so this is a valid conditional linear parametrisation in the sense of Definition~\ref{def:condlinparam}. The \(z_t\)-dependence is confined entirely to the training target \(F_t^{z_t}(x_t)\).

\subsection{Recovery of Theorem~3.1 of \citet{havasi2025editflowsflowmatching}}\label{subsec:ef-recovery}

By \eqref{eqn:margfromcond} and the linearity of the inner product in the second argument, the linear parametrisation of the marginal generator has
\begin{equation*}
    F_t(x_t) = \E[F_t^{Z_t}(x_t) \mid X_t = x_t],
\end{equation*}
whose \(x'\)-component is
\begin{equation*}
    F_t(x_t)_{x'} = \E_{z_t \sim p_t(\cdot \mid x_t)}\left[\sum_{z'} u_t(x',z' \mid x_t,z_t)\right] = \sum_{z'}\E_{z_t \sim p_t(\cdot \mid x_t)}[u_t(x',z' \mid x_t,z_t)].
\end{equation*}
Writing \(u_t(x' \mid x_t) := F_t(x_t)_{x'}\) for the projected marginal rate, we recover the first part of Theorem~3.1 of \citep{havasi2025editflowsflowmatching}. Indeed, as a direct consequence of Corollary~\ref{cor:projectedprocess}, if \(u_t(x',z' \mid x_t,z_t)\) generates \(p_t(x,z)\), then
\begin{equation}\label{eqn:efprojectedrates-app}
    u_t(x' \mid x_t) = \sum_{z'}\E_{z_t \sim p_t(\cdot \mid x_t)}[u_t(x',z' \mid x_t, z_t)] \qquad \text{generates} \qquad p_t^{\cX}(x) = \sum_z p_t(x,z).
\end{equation}

The second part of Theorem~3.1 in \citep{havasi2025editflowsflowmatching} states that if \(D_\phi\) is a Bregman divergence, then
\begin{equation*}
    \nabla_\theta \E_{(x_t,z_t) \sim p_t} [D_\phi(\tilde{u}_t(\cdot \mid x_t, z_t), u_t^\theta(\cdot \mid x_t))] = \nabla_\theta \E_{x_t \sim p_t(x)}[D_\phi(u_t(\cdot \mid x_t), u_t^\theta(\cdot \mid x_t))].
\end{equation*}
In our setting, this is Theorem~\ref{thrm:losseshavesamegrads} applied to the conditional linear parametrisation above.

\section{Non-uniqueness of the KFE in Example~\ref{ex:regularity-violating}}\label{app:regularity-violating}

In this appendix we prove the claim made in Example~\ref{ex:regularity-violating}, namely that the marginal generator $L_t$ arising from applying the map $\Phi(x) = e^{-1/x}\cdot I_{\{x>0\}}$ to a standard Brownian motion does not uniquely determine the probability path, so that Assumption~\ref{ass:gm-cX} fails. For this, we construct a one-parameter family of probability paths all satisfying the same KFE with the same initial condition.

\subsection{The pushforward generator}\label{subsec:pfgen-computation}

Recall that $\Phi\colon\R\to\R_{\geq 0}$ is defined by $\Phi(x) = e^{-1/x}$ for $x>0$ and $\Phi(x)=0$ otherwise. Let $(B_t)_{t\geq 0}$ be a standard Brownian motion with generator $W_tf(x) = \frac{1}{2}\partial_{xx}f(x)$. For $t>0$, the pushforward marginals are
\begin{equation}\label{eqn:marginal-decomp}
    p_t^{\cX}(dx) = \frac{1}{2}\delta_0(dx) + \rho_t(x)dx,
\end{equation}
where $\rho_t$ is the density on $(0,1)$ arising from the pushforward of the positive half of the Gaussian.

For $x_t \in (0,1)$, the preimage $\Phi^{-1}(\{x_t\})$ is a single point $w = -1/\log x_t > 0$, so the conditional expectation in Theorem~\ref{thrm:pushforwardkfe} simply becomes 
\begin{align*}\label{eqn:interior-generator}
    L_t f(x_t) =  \mathbb E[W_t(f\circ \Phi)(B_t)\mid \Phi(B_t) = x_t] &=  \frac{1}{2}(f\circ\Phi)''(w) \\
    &= \frac{1}{2}\Phi'(w)^2 f''(x_t) + \frac{1}{2}\Phi''(w)f'(x_t).
\end{align*}
Moreover, switching to $x_t =0$, since $\Phi$ and all its derivatives vanish at $0$, the chain rule gives $(f\circ\Phi)^{(k)}(0) = 0$ for all $k\geq 1$. Therefore
\begin{equation}\label{eqn:boundary-generator}
    L_t f(0) = \E\bigl[\frac{1}{2}(f\circ\Phi)''(B_t) \mid B_t \leq 0\bigr] = 0
\end{equation}
for every $f\in\cT(\R_{\geq 0})$ and every $t > 0$.
\subsection{A one-parameter family of solutions}\label{subsec:nonuniqueness}
By the KFE, it holds that $\partial_t\langle p_t^{\cX}, f\rangle = \langle p_t^{\cX}, L_t f\rangle$, which in our case becomes
\begin{equation}\label{eqn:density-KFE}
    \int_0^1 f(x)\partial_t\rho_t(x)dx = \frac 12  L_tf(0) + \int_0^1 L_tf(x)\rho_t(x)dx = \int_0^1 L_tf(x) \rho_t(x)dx.
\end{equation}
For any $c\in[0,1]$, we define probability paths 
\begin{equation}\label{eqn:rescaled-path}
    q_t^{(c)}(dx) := \begin{cases} \delta_0(dx) & t = 0,\\ c\delta_0(dx) + 2(1-c)\rho_t(x)dx & t > 0.\end{cases}
\end{equation}
Since $B_0 = 0$ a.s., we have $p_0^{\cX} = \delta_0$, so every member of the one-parameter family shares the initial condition $q_0^{(c)} = p_0^\cX$. For $t > 0$ and $f\in \mathcal T(\mathbb R_{\geq 0})$, one has
\begin{align}
    \partial_t\langle q_t^{(c)}, f\rangle
    &= 2(1-c)\int_0^1 f(x)\partial_t\rho_t(x)dx
    = 2(1-c)\int_0^1 L_tf(x)\rho_t(x)dx, \label{eqn:lhs-kfe}\\[4pt]
    \langle q_t^{(c)}, L_t f\rangle
    &= cL_t f(0) + 2(1-c)\int_0^1 L_tf(x)\rho_t(x)dx = 2(1-c)\int_0^1 L_tf(x) \rho_t(x) dx.\label{eqn:rhs-kfe}
\end{align}
so $q_t^{(c)}$ satisfies $\partial_t\langle q_t^{(c)}, f\rangle = \langle q_t^{(c)}, L_t f\rangle$ for every $c\in[0,1]$. Therefore, in this case, Assumption~\ref{ass:gm-cX} fails.

\end{document}